\documentclass{article}
\usepackage[height=8in,width=6.5in]{geometry}
\usepackage{times}
\usepackage{amsmath,amsthm}
\usepackage[sort,numbers]{natbib}

\ifx\pdfoutput\undefined
\usepackage{graphicx}
\usepackage{epsfig}
\else
\usepackage[pdftex]{graphicx}
\usepackage{epstopdf}
\fi 

\usepackage[boxed,vlined]{algorithm2e}
\usepackage{xspace}
\usepackage{paralist}
\usepackage{array}

\input{header}

\newcommand{\iS}{\Sigma^{-1}}
\newtheorem{theorem}{Theorem}

\newtheorem{prop}[theorem]{Proposition}

\theoremstyle{definition}
\newtheorem{defn}[theorem]{Definition}

\newcommand{\sice}{{\small\textsf{SICE}}\xspace}
\newcommand{\pg}{{\small\textsf{PG}}\xspace}
\newcommand{\aspg}{{\small\textsf{ASPG}}\xspace}
\newcommand{\anes}{{\small\textsf{ANES}}\xspace}
\newcommand{\scov}{{\small\textsf{SCOV}}\xspace}
\newcommand{\matlab}{\textsc{Matlab}}

\newcommand{\normll}[1]{\norm{#1}{1,1}}

\newcommand{\proj}[2]{\mathrm{P}_{#1}\left(#2\right)}
\newcommand{\projU}[1]{\proj{\mathcal{U}}{#1}}
\newcommand{\aset}{\mathcal{A}}  
\newcommand{\bset}{\mathcal{B}}  
\newcommand{\kset}{\mathcal{K}}  
\newcommand{\sym}{\mathcal{S}}   
\newcommand{\primal}{\mathcal{F}}  
\newcommand{\dual}{\mathcal{G}}  
\newcommand{\abs}[1]{\left|#1\right|} 


\newcommand{\email}[1]{\texttt{#1}}

\title{Sparse Inverse Covariance Estimation via an Adaptive Gradient-Based Method}
\author{\begin{tabular}{ccc}
    Suvrit Sra\thanks{This work was done when SS was affiliated with the Max Planck Institute for Biological Cybernetics. We finished this preprint on June 3, 2010, but have uploaded it after more than a year to arXiv (in June 2011), so it is not the definitive version of this work.} &\hskip 12pt &Dongmin Kim\\
    Max Planck Institute for Intelligent Systems & & University of Texas at Austin\\
    T\"ubingen, Germany & & Austin, TX 78712\\
    \email{suvrit@tuebingen.mpg.de} & & \email{dmkim@cs.utexas.edu}
  \end{tabular}}
\date{version of: June 3, 2010}
\begin{document}
\maketitle

\begin{abstract}
  We study the problem of estimating from data, a sparse approximation to the
  inverse covariance matrix. Estimating a sparsity constrained inverse
  covariance matrix is a key component in Gaussian graphical model learning,
  but one that is numerically very challenging. We address this challenge by
  developing a new adaptive gradient-based method that carefully combines
  gradient information with an adaptive step-scaling strategy, which results
  in a scalable, highly competitive method. Our algorithm, like its
  predecessors, maximizes an $\ell_1$-norm penalized log-likelihood and has
  the same per iteration arithmetic complexity as the best methods in its
  class. Our experiments reveal that our approach outperforms state-of-the-art
  competitors, often significantly so, for large problems.
\end{abstract}

\section{Introduction}
A widely employed multivariate analytic tool is a Gaussian Markov Random Field
(GMRF), which in essence, simply defines Gaussian distributions over an
undirected graph. GMRFs are enormously useful; familiar applications include
speech recognition~\citep{bilmes}, time-series analysis, semiparametric
regression, image analysis, spatial statistics, and graphical model
learning~\citep{lauritzen,rue,waJo08}.

In the context of graphical model learning, a key objective is to learn the
structure of a GMRF~\citep{rue,waJo08}. Learning this structure often boils
down to discovering \emph{conditional} independencies between the variables,
which in turn is equivalent to locating zeros in the associated inverse
covariance (precision) matrix $\iS$---because in a GMRF, an edge is present
between nodes $i$ and $j$, if and only if $\iS_{ij} \neq
0$~\citep{rue,lauritzen}. Thus, estimating an inverse covariance matrix with
zeros is a natural task, though formulating it involves two obvious, but
conflicting goals: (i) \emph{parsimony} and (ii) \emph{fidelity}.

It is easy to acknowledge that fidelity is a natural goal; in the GMRF setting
it essentially means obtaining a statistically consistent estimate (e.g.~under
mild regularity conditions a maximum-likelihood estimate~\citep{mood}). But
parsimony too is a valuable target. Indeed, it has long been recognized that
sparse models are more robust and can even lead to better generalization
performance~\citep{dempster}. Furthermore, sparsity has great practical value,
because it: (i) reduces storage and subsequent computational costs
(potentially critical to settings such as speech recognition~\citep{bilmes});
(ii) leads to GMRFs with fewer edges, which can speed up inference
algorithms~\citep{malitov,waJo08}; (iii) reveals interplay between variables
and thereby aids structure discovery, e.g., in gene analysis~\citep{dobra}.

Achieving parsimonious estimates while simultaneously respecting fidelity is
the central aim of the Sparse Inverse Covariance Estimation (SICE)
problem. Recently,~\citet{aspre08} formulated SICE by considering the
penalized Gaussian log-likelihood~\citep{aspre08} (omitting constants)
\begin{equation*}
  \log\det(X) - \trace(SX) - \rho\nlsum_{ij} |x_{ij}|,
\end{equation*}
where $S$ is the sample covariance matrix and $\rho > 0$ is a parameter that
trades-off between fidelity and sparsity. Maximizing this penalized
log-likelihood is a non-smooth concave optimization task that can be
numerically challenging for high-dimensional $S$ (largely due to the
positive-definiteness constraint on $X$).  We address this challenge by
developing a new adaptive gradient-based method that carefully combines
gradient information with an adaptive step-scaling strategy. And as we will
later see, this care turns out to have a tremendous positive impact on
empirical performance. However, before describing the technical details, we
put our work in broader perspective by first enlisting the main contributions
of this paper, and then summarizing related work.

\vspace*{-5pt}
\subsection{Contributions}
\vspace*{-5pt}%
This paper makes the following main contributions:
\begin{enumerate}
\vspace*{-5pt}%
  \setlength{\itemsep}{-1pt}
\item It develops and efficient new gradient-based algorithm for SICE.
\item It presents associated theoretical analysis, formalizing key properties
  such as convergence.
\item It illustrates empirically not only our own algorithm, but also
  competing approaches.
\end{enumerate}
Our new algorithm has $O(n^3)$ arithmetic complexity per iteration like other
competing approaches. However, our algorithm has an edge over many other
approaches, because of three reasons: (i) the constant within the big-Oh is
smaller, as we perform only one gradient evaluation per iteration; (ii) the
eigenvalue computation burden is reduced, as we need to only occasionally
compute positive-definiteness; and (ii) the empirical convergence turns out to
be rapid. We substantiate the latter observation through experiments, and note
that state-of-the-art methods, including the recent sophisticated methods
of~\citet{lu09,lu10}, or the simple but effective method of~\citep{duchi}, are
both soundly outperformed by our method.

\noindent\paragraph{Notation:} We denote by $\sym^n$ and $\sym_+^n$, the set
of $n \times n$ symmetric, and semidefinite ($\succeq 0$) matrices,
respectively. The matrix $S \in \sym_+^n$ denotes the sample covariance matrix
on $n$ dimensional data points. We use $|X|$ to denote the matrix with entries
$|x_{ij}|$. Other symbols will be defined when used.

\subsection{Related work, algorithmic approaches}
\label{sec:backgr}
\citet{dempster} was the first to formally investigate SICE, and he proposed
an approach wherein certain entries of the inverse covariance matrix are
explicitly set to zero, while the others are estimated from data. Another old
approach is based on greedy forward-backward search, which is impractical as
it requires $O(n^2)$ MLEs~\citep{rue,lauritzen}. More recently,~\citet{mein06}
presented a method that solves $n$ different Lasso subproblems by regressing
variables against each other: this approach is fast, but approximate, and it
does not yield the maximum-likelihood estimator~\citep{glasso,aspre08}.

An $\ell_1$-penalized log-likelihood maximization formulation of SICE was
considered by~\citep{aspre08} who introduced a nice block-coordinate descent
scheme (with $O(n^4)$ per-iteration cost); their scheme formed the basis of
the \emph{graphical lasso} algorithm~\citep{glasso}, which is much faster for
very sparse graphs, but unfortunately has unknown theoretical complexity and
its convergence is unclear~\citep{lu09,lu10}. There exist numerous other
approaches to SICE, e.g., the Cholesky-decomposition (of the inverse
covariance) based algorithm of~\citep{rothman}, which is an expensive $O(n^4)$
per-iteration method. There are other Cholesky-based variants, but these are
either heuristic~\citep{huang}, or limit attention to matrices with a known
banded-structure~\citep{roth08}. Many generic approaches are also possible,
e.g., interior-point methods~\citep{ipbook} (usually too expensive for SICE),
gradient-projection~\citep{gpm,bertsekas,duchi,spg,lu10}, or projected
quasi-Newton~\citep{lbfgs,markpqn}. 

Due to lack of space we cannot afford to give the various methods a survey
treatment, and restrict our attention to the recent algorithms
of~\citep{lu09,lu10} and of~\citep{duchi}, as these seem to be over a broad
range of inputs the most competitive methods. \citet{lu09} optimizes the
dual~\eqref{eq:15} by applying Nesterov's techniques~\citep{nest05}, obtaining
an $O(1/\sqrt{\epsilon})$ iteration complexity algorithm (for
$\epsilon$-accuracy solutions), that has $O(n^3)$ cost per iteration. He
applies his framework to solve~\eqref{eq:sice} (with $R=\rho ee^T$), though to
improve the empirical performance he derives a variant with slightly weaker
theoretical guarantees. In~\citep{lu10}, the author derives two ``adaptive''
methods that proceed by solving a sequence of SICE problems with varying
$R$. His first method is based on an adaptive spectral projected
gradient~\citep{spg}, while second is an adaptive version of his Nesterov
style method from~\citep{lu09}. \citet{duchi} presented a simple, but
surprisingly effective gradient-projection algorithm that uses backtracking
line-search initialized with a ``good'' guess for the step-size (obtained
using second-order information). We remark, however, that even though
\citeauthor{duchi}'s algorithm~\citep{duchi} usually works well in practice,
its convergence analysis has a subtle problem: the method ensures descent, but
\emph{not} sufficient descent, whereby the algorithm can ``converge'' to a
non-optimal point. Our express goal is the paper is to derive an algorithm
that not only works better empirically, but is also guaranteed to converge to
the true optimum.

\subsection{Background: Problem formulation}
We begin by formally defining the SICE problem, essentially using a
formulation derived from~\citep{aspre08,duchi}:
\begin{equation}
  \label{eq:sice}
  \min_{X \succ 0}\quad \primal(X) = \trace(SX)-\log\det(X) + \trace(R|X|).
\end{equation}
Here $R \in \sym^n$ is an elementwise non-negative matrix of \emph{penalty}
parameters; if its $ij$-entry $r_{ij}$ is large, it will shrink $|x_{ij}|$
towards zero, thereby gaining sparsity. Following~\citep{lu10} we impose the
restriction that $S + \Diag(R) \succ 0$, where $\Diag(R)$ is the diagonal
(matrix) of $R$. This restriction ensures that~\eqref{eq:sice} has a solution,
and subsumes related assumptions made in~\citep{aspre08,duchi}. Moreover, by
the strict convexity of the objective, this solution is unique.

Instead of directly solving~\eqref{eq:sice}, we also
(like~\citep{aspre08,duchi,lu09,lu10}) focus on the dual as it is
differentiable, and has merely bound constraints. A simple device to derive dual problems to $\ell_1$-constrained problems
is~\citep{l1ls}: introduce a new variable $Z=X$ and a corresponding dual
variable $U$. Then, the Lagrangian is
\begin{equation}
  \label{eq:5}
  L(X,Z,U) = \trace(SZ)-\log\det(Z) + \normll{R \odot X} + \trace(U(Z-X)).
\end{equation}
The dual function $g(U) = \inf_{X,Z} L(X,Z,U)$. Computing this infimum is easy
as it separates into infima over $X$ and $Z$. We have
\begin{equation}
  \label{eq:6}
  \inf_{Z}\quad\trace(SZ)-\log\det(Z) + \trace(UZ),
\end{equation}
which yields
\begin{equation}
\label{eq.1}
  S - Z^{-1} + U = 0,\quad\text{or}\quad Z = (S + U)^{-1}. 
\end{equation}
Next consider
\begin{equation}
  \label{eq:7}
  \inf_X\quad\normll{R \odot X} - \trace(UX) =
  \inf_X\quad\nlsum_{ij}\rho_{ij}|x_{ij}| - \nlsum_{ij} u_{ij}x_{ij},
\end{equation}
which is unbounded unless $|u_{ij}| \le \rho_{ij}$, in which case it equals
zero. This, with~\eqref{eq.1} yields the dual function
\begin{equation}
  \label{eq:8}
  g(U) =
  \begin{cases}
    \trace((S+U)Z) - \log\det(Z) & |u_{ij}| \le \rho_{ij},\\
    -\infty & \text{otherwise}.
  \end{cases}
\end{equation}
Hence, the dual optimization problem may be written as (dropping constants)
\begin{equation}
  \label{eq:15}
  \begin{split}
    \min_U\quad &-\log\det(S+U)\\
    \text{s.t.}\quad& |u_{ij}| \le \rho_{ij},\quad i,j \in [n].
  \end{split}
\end{equation}

Observe that $-\nabla\dual = (S+U)^{-1} = X$; so we easily obtain the primal
optimal solution from the dual-optimal. Defining, $X_U = (S+U)^{-1}$, the
duality gap may be computed as
\begin{equation}
  \label{eq:1}
  \primal(X_U) - \dual(U) = \trace(SX_U) + \trace(R|X_U|) - n.
\end{equation}
Given this problem formalization we are now ready to describe our algorithm.

\section{Algorithm}
\label{sec:algorithm}
Broadly viewed, our new adaptive gradient-based algorithm consists of three
main components: (i) gradient computation; (ii) step-size computation; and
(iii) ``checkpointing'' for positive-definiteness. Computation of the gradient
is the main $O(n^3)$ cost that we (like other gradient-based methods), must
pay at every iteration. Even though we compute gradients only once per
iteration, saving on gradient computation is not enough. We also need a better
strategy for selecting the step-size and for handling the all challenging
positive-definiteness constraint. Unfortunately, both are easier said than
done: a poor choice of step-size can lead to painfully slow
convergence~\citep{bertsekas}, while enforcing positive-definiteness requires
potentially expensive eigenvalue computations. The key aspect of our method is
a new step-size selection strategy, which helps greatly accelerate empirical
convergence. We reduce the unavoidable cost of eigenvalue computation by not
performing it at every iteration, but only at certain ``checkpoint''
iterations. Intuitively, at a checkpoint the iterate must be tested for
positive-definiteness, and failing satisfaction an alternate strategy (to be
described) must be invoked. We observed that in practice, most intermediate
iterates generated by our method usually satisfy positive-definiteness, so the
alternate strategy is triggered only rarely. Thus, the simple idea of
checkpointing has a tremendous impact on performance, as borne through by our
experimental results.  Let us now formalize these ideas below.

We begin by describing our stepsize computation, which is derived from the
well-known Barzilai-Borwein (BB) stepsize~\citep{bb} that was found to
accelerate (\emph{unconstrained}) steepest descent
considerably~\citep{bb,Raydan:BB:1993,Dai:PBB:2005}. Unfortunately the BB
stepsizes do \emph{not} work out-of-the-box for \emph{constrained} problems,
i.e.,~nai\"ively plugging them into gradient-projection does not
work~\citep{Dai:PBB:2005}. For constrained setups the spectral projected
gradient (SPG) algorithm~\citep{spg} is a popular method based on BB stepsizes
combined with a globalization strategy such as non-monotone
line-searches~\citep{Grippo:BB:2002}. For SICE, very recently~\citet{lu10}
exploited SPG to develop a sophisticated, adaptive SPG algorithm for SICE.
For deriving our BB-style stepsizes, we first recall that for unconstrained
\emph{convex quadratic} problems, steepest-descent with BB is guaranteed to
converge~\citep{Friedlander:BB:1995}. This immediately suggests using an
\emph{active-set} idea, which says that if we could identify variables active
at the solution, we could reduce the optimization to an unconstrained problem,
which might be ``easier'' to solve than its constrained counterpart.  We thus
arrive at our first main idea: \emph{use the active-set information to modify
  the computation of the BB step itself}. This simple idea turns out to have a
big impact on empirical consequences; we describe it below.

We partition the variables into \emph{active} and \emph{working} sets, and
carry out the optimization over the working set alone. Moreover, since
gradient information is also available, we exploit it to further refine the
active-set to obtain the \emph{binding} set; both are formalized below.
\begin{defn}
  Given dual variable $U$, the \emph{active} set $\aset(U)$ and the
  \emph{binding} set $\bset(U)$ \mbox{are~defined~as}
\begin{align}
  \aset(U) &= \big\{ (i,j) \ \big|\ \abs{U_{ij}} = \rho_{ij}
  \big\}, \label{eq:aset}, \\ 
  \bset(U) &= \big\{ (i,j) \ \big|\ U_{ij}= -\rho_{ij},\  \partial_{ij}
  \dual(U) > 0,\quad\text{or}\quad
  U_{ij}= \rho_{ij},\  \partial_{ij} \dual(U) < 0 \big\}, \label{eq:bset}  
\end{align}
\end{defn}
where $\partial_{ij}\dual(U)$ denotes the $(i,j)$ component of
$\nabla\dual(U)$. The role of the binding set is simple: variables bound at
the current iteration are guaranteed to be active at the next. Specifically,
let $\Uc =  \set{U \ \big|\ \abs{U_{ij}} = \rho_{ij}}$, and denote orthogonal
projection onto $\mathcal{U}$ by $\projU{\cdot}$, i.e., 
\begin{equation}
\projU{U_{ij}} = \mathrm{mid}\{ U_{ij},\ -\rho_{ij},\ \rho_{ij}\}.
\end{equation}
If $(i,j)\in\bset(U^k)$ and we iterate $U^{k+1} = \projU{U^k - \gamma^k \nabla
  \dual(U^k)}$ for some $\gamma^k > 0$, then since
\[ \abs{U^{k+1}_{ij}} = \abs{ \projU{U^k_{ij} - \gamma^k \partial_{ij}
    \dual(U^k)} } = \rho_{ij}, \] the membership $(i,j) \in \aset(U^{k+1})$
holds. Therefore, if we know that $(i,j)\in\bset(U^k)$, we may discard
$U_{ij}^k$ from the update. We now to employ this idea in conjunction with the
BB step, and introduce our new modification: first compute $\bset(U^k)$ and
then confine the computation of the BB step to the \emph{subspace} defined by
$(i,j)\notin\bset(U^k)$. The last ingredient that we need is \emph{safe-guard}
parameters $\sigma_{\min}, \sigma_{\max}$, which ensure that the BB-step
computation is well-behaved~\citep{lu10,Grippo:BB:2002,ray97}. We thus obtain
our modified computation, which we call \emph{modified-BB} (MBB) step:
\begin{defn}[Modified BB step]
\begin{equation}
\begin{split}
  \label{eq:rbb}
  \tilde{\sigma} = \frac{\norm{\Delta\tilde{U}^{k-1}}{F}^2}
    {\sum_i\sum_j{\Delta\tilde{U}^{k-1}_{ij}\cdot
        \Delta\tilde{\dual}^{k-1}_{ij}}} \cdots&\cdots
  (a), \qquad\text{or}\qquad 
\tilde{\sigma} =
\frac{\sum_i\sum_j{\Delta\tilde{U}^{k-1}_{ij} \cdot
    \Delta\tilde{\dual}^{k-1}_{ij}}}
{\norm{\Delta\tilde{\dual}^{k-1}}{F}^2}
\cdots\cdots(b),  \\
\sigma^k &= \mathrm{mid}\{\sigma_{min},\ \tilde{\sigma},\ \sigma_{max}\}.
\end{split}
\end{equation}
where $\Delta\tilde{U}^{k-1}$ and $\Delta\tilde{\dual}^{k-1}$ are defined by
over only non-bound variables by the following formulas:
\begin{align}
\Delta\tilde{U}^{k-1}_{ij} &= \left\{ \begin{array}{cl}
[U^{k-1} - U^{k-2}]_{ij}, & \text{if}\quad (i,j) \notin \bset(U^k), \\
0, & \text{otherwise}. 
\end{array}\right. \\
\Delta\tilde{\dual}^{k-1}_{ij} &= \left\{ \begin{array}{cl}
[\nabla\dual^{k-1} - \nabla\dual^{k-2}]_{ij}, & \text{if}\quad
(i,j) \notin \bset(U^k), \\ 
0, & \text{otherwise}. 
\end{array}\right.
\end{align}
\end{defn}

Assume for the moment that we did not have the positive-definite
constraint. Even then, our MBB steps derived above, are not enough to ensure
convergence. This is not surprising since even for the \emph{unconstrained}
general convex objective case, ensuring convergence of BB-step-based methods,
globalization (line-search) strategies are
needed~\citep{Grippo:BB:2002,ray97}; the problem is only worse for constrained
problems too, and some line-search is needed to ensure
convergence~\citep{Friedlander:BB:1995,spg,Grippo:BB:2002}. The easy solution
for us too would be invoke a non-monotone line-search strategy. However, we
observed that MBB step alone often produces converging iterates, and even lazy
line-search in the style of~\citep{Dai:PBB:2005,Grippo:BB:2002} affects it
adversely. Here we introduce our second main idea: To retain empirical
benefits of subspace steps while still guaranteeing convergence, we propose to
scale the MBB stepsize using a diminishing scalar sequence, but \emph{not}
out-of-the-box; instead we propose to relax the application of diminishing
scalars by introducing an ``optimistic'' diminishment strategy. Specifically,
we scale the MBB step~\eqref{eq:rbb} with some constant scalar $\delta$ for a
fixed number, say $M$, of iterations. Then, we check whether a \emph{descent
  condition} is satisfied. If the current iterate passes the descent test,
then the method continues for another $M$ iterations with the \emph{same}
$\delta$; if it fails to satisfy the descent, then we diminish the scaling
factor $\delta$. This diminishment is ``optimistic'' because even when the
method fails to satisfy the descent condition that triggered diminishment, we
merely diminish $\delta$ once, and continue using it for the next $M$
iterations. We remark that superficially this strategy might seem similar to
occasional line-search, but it is fundamentally different: unlike line-search
our strategy does \emph{not} enforce monotonicity after failing to descend for
a prescribed number of iterations. We formalize this below.
 
Suppose that the method is at iteration $c$, and from there, iterates with a
constant $\delta^c$ for the next $M$ iterations, so that from the current
iterate $U^c$, we compute
\begin{equation}
\label{eq:asbb}
  U^{k+1} = \projU{U^{k} - \delta^c\cdot\sigma^{k} \nabla \dual(U^{k})},
\end{equation}
for $k=c,c+1,\cdots, c+M-1$, where $\sigma^{k}$ is computed
via~\eqref{eq:rbb}-(a) and~\eqref{eq:rbb}-(b) alternatingly. 
Now, at the iterate $U^{c+M}$, we check the \emph{descent
  condition:}
\begin{defn}[Descent condition]
\begin{equation}
\label{eq:descent}
  \dual(U^c)-\dual(U^{c+M}) \ge \kappa \sum_{i}\sum_{j} \partial_{ij}
  \dual(U^c)\cdot[U^c - U^{c+M}]_{ij}, 
\end{equation}
for some fixed parameter $\kappa \in (0, 1)$. 
\end{defn}
If iterate $U^{c+M}$ passes the test~\eqref{eq:descent}, then we reuse
$\delta^c$ and set $\delta^{c+M} = \delta^c$; otherwise we diminish $\delta^c$
and set $\delta^{c+M} \gets \eta\cdot\delta^c$, for some fixed $\eta \in
(0,1)$. After adjusting $\delta^{c+M}$ the method repeats the
update~\eqref{eq:asbb} for another $M$ iterations. Finally, we measure the
duality gap every $M$ iterations to determine termination of the
method. Explicitly, given a stopping threshold $\epsilon > 0$, we define:
\begin{defn}[Stopping criterion]
\begin{equation}
  \label{eq:term}
  \primal(-\nabla\dual(U)) - \dual(U) = \primal(X_U) - \dual(U) =
  \trace(SX_U) + \trace(R\abs{X_U}) - n < \epsilon. 
\end{equation}
\end{defn}
Algorithm~\ref{alg:sice} encapsulates all the above details, and presents
pseudo-code of our algorithm \sice.
\begin{center}
\begin{minipage}{0.8\linewidth}
  \setlength{\algomargin}{5pt}
  \begin{algorithm}[H]
    \caption{Sparse Inverse Covariance Estimation (\sice).}
    \label{alg:sice}
    Given $U^0$ and $U^1$\;
    \For{$i=1,\cdots$ until the stopping criterion~\eqref{eq:term} met}{
      $\tilde{U}^0 \gets U^{i-1}$ and $\tilde{U}^1 \gets U^i$\;
      \For(/* Modified BB */){$j=1,\cdots, M$}{
        Compute $\bset(\tilde{U}^j)$ then compute $\sigma^j$
        using~\eqref{eq:rbb}-(a) and -(b)
        alternatingly\;
        $\tilde{U}^{j+1} \gets \projU{\tilde{U}^j -
          \delta^i\cdot\sigma^j \nabla\dual(\tilde{U}^j) }$\;
      }
      \eIf{Is $S+\tilde{U} \succ 0$}{\eIf {$\tilde{U}^M$ satisfies~\eqref{eq:descent}}{ 
        $U^{i+1} \gets \tilde{U}^M,$ and $\delta^{i+1} \gets \delta^i$ \;
      }(        /* Diminish Optimistically */){
        $\delta^{i+1} \gets \eta\delta^i$, where $\eta \in (0, 1)$\;
      }}{Enforce posdef\;
      $\tilde{U}^M \gets $\;
      \eIf{$\tilde{U}^M$ satisfies~\eqref{eq:descent}}{$U^{i+1} \gets \tilde{U}^M,$ and $\delta^{i+1} \gets \delta^i$\;}(        /* Diminish Optimistically */){$\delta^{i+1} \gets \eta\delta^i$, where $\eta \in (0, 1)$\;}
    }}
  \end{algorithm}
\end{minipage}
\end{center}

\section{Analysis}
\label{sec:analysis}
In this section we analyze theoretical properties of \sice. First, we
establish convergence, and then briefly discuss some other properties.

For clarity, we introduce some additional notation. Let $M$ be the fixed
number of modified-BB iterations, i.e., descent condition~\eqref{eq:descent}
is checked only every $M$ iterations. We index these $M$-th iterates with
$\kset = \{1,\ M+1,\ 2M+1,\ 3M+1,\ \cdots\}$, and then consider the sequence
$\{U^r\},\ r\in\kset$ generated by \sice.  Let $U^*$ denote the optimal
solution to the problem. We first show that such an optimal exists.
\begin{theorem}[Optimal]
  If $R > 0$, then there exists an optimal $U^*$ to~\eqref{eq:15} such
  that $$X^*(S +U^*) = \mi,\quad\text{and}\quad\trace(X^*U^*) =
  \trace(R|X^*|).$$
\end{theorem}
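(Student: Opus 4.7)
The plan splits into two pieces: first establish existence of a dual optimizer $U^*$, and then read off both identities from the KKT conditions of~\eqref{eq:15}.

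For existence, I would observe that the feasible set $\mathcal{U}=\{U\in\sym^n:|u_{ij}|\le \rho_{ij}\}$ is a compact box in $\sym^n$ and, under the running assumption $S+\Diag(R)\succ 0$, meets the open positive-definite cone (since $\Diag(R)\in\mathcal{U}$). On $\mathcal{U}\cap\{U:S+U\succ 0\}$ the dual objective $\dual(U)=-\log\det(S+U)$ is continuous, and $\dual(U)\to+\infty$ whenever $S+U$ approaches the boundary of $\sym_+^n$. Hence the sublevel sets of $\dual$ restricted to $\mathcal{U}$ are compact and contained in the open PD cone, so the minimum is attained at some $U^*$ with $S+U^*\succ 0$. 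Setting $X^*:=(S+U^*)^{-1}$ immediately yields $X^*(S+U^*)=\mi$.

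For the complementary slackness identity I would invoke first-order optimality at $U^*$. Since $\dual$ is differentiable at $U^*$ and the feasible region is polyhedral, KKT applies without any further constraint qualification: there exist elementwise-nonnegative multipliers $\Lambda^+,\Lambda^-\in\sym^n$ with
\[
-(S+U^*)^{-1}+\Lambda^+-\Lambda^-=0,\qquad
\lambda^+_{ij}(u^*_{ij}-\rho_{ij})=0,\qquad
\lambda^-_{ij}(u^*_{ij}+\rho_{ij})=0.
\]
This gives $x^*_{ij}=\lambda^+_{ij}-\lambda^-_{ij}$ for every $(i,j)$. Because $\rho_{ij}>0$, the two opposite sign constraints $u^*_{ij}=\rho_{ij}$ and $u^*_{ij}=-\rho_{ij}$ cannot be simultaneously active, so at most one of $\lambda^+_{ij},\lambda^-_{ij}$ is nonzero at each index, whence $|x^*_{ij}|=\lambda^+_{ij}+\lambda^-_{ij}$. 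Substituting into
\[
\trace(X^*U^*)=\sum_{i,j}x^*_{ij}u^*_{ij}=\sum_{i,j}(\lambda^+_{ij}-\lambda^-_{ij})u^*_{ij}
\]
and using complementary slackness to replace $\lambda^+_{ij}u^*_{ij}$ by $\lambda^+_{ij}\rho_{ij}$ and $\lambda^-_{ij}u^*_{ij}$ by $-\lambda^-_{ij}\rho_{ij}$, the right-hand side collapses to $\sum_{i,j}\rho_{ij}|x^*_{ij}|=\trace(R|X^*|)$, as claimed.

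The main subtlety is using the hypothesis $R>0$ at exactly the right place: it is what prevents both opposite box constraints from binding at once and thereby converts the signed combination $\lambda^+_{ij}-\lambda^-_{ij}$ into the absolute value $\lambda^+_{ij}+\lambda^-_{ij}$; without this one could not identify $\trace(X^*U^*)$ with $\trace(R|X^*|)$. The existence step is otherwise routine once one notes the coercivity of $-\log\det$ near the boundary of $\sym_+^n$.
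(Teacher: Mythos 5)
Your proof is correct, but it takes a genuinely different and more self-contained route than the paper. The paper's proof is a two-line citation argument: it invokes \citep[Proposition~2.2]{lu10} for the existence and uniqueness of the primal solution $X^*\in\posdef{n}$, and \citep[Lemma~1]{duchi} for a strictly feasible dual point in $\interior(\Uc)$, and then appeals to Slater's condition to get strong duality and the stated optimality relations. You instead work entirely on the dual: existence of $U^*$ follows from compactness of the box plus coercivity of $-\log\det(S+U)$ at the boundary of $\sym_+^n$ (using $S+\Diag(R)\succ 0$ to guarantee the feasible set meets the open cone), and the identity $\trace(X^*U^*)=\trace(R|X^*|)$ is extracted from the KKT conditions, which apply here without any Slater-type qualification because the constraints are linear. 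Your observation that $R>0$ is what prevents the two opposite bounds from binding simultaneously, and hence converts $\lambda^+_{ij}-\lambda^-_{ij}$ into $|x^*_{ij}|=\lambda^+_{ij}+\lambda^-_{ij}$, is exactly the right use of the hypothesis and is not made explicit in the paper. The only cosmetic addition I would suggest: note that your two identities together give $\trace(SX^*)+\trace(R|X^*|)-n=0$, i.e.\ zero duality gap in~\eqref{eq:1}, so that $X^*=(S+U^*)^{-1}$ is indeed the (unique) primal optimum of~\eqref{eq:sice} --- this ties your $X^*$ to the one the theorem statement implicitly refers to. What your approach buys is independence from the two external lemmas; what the paper's buys is brevity.
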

\begin{proof}
  From \citep[Proposition~2.2]{lu10} (or the supplementary material), We know
  that Problem~\eqref{eq:sice} has a unique optimal solution $X^* \in
  \posdef{n}$. On the other hand, from \citep[Lemma~1]{duchi}, there exists a
  feasible point $U \in \text{int}(\Uc)$, thereby Slater's constraint
  qualification guarantees that the theorem holds.
\end{proof}

We remind the reader that when proving convergence of iterative optimization
routines, one often assumes Lipschitz continuity of the objective. We,
therefore begin our discussion by stating the fact that the dual objective
function $\dual(U)$ is indeed Lipschitz continuous.
\begin{prop}[Lipschitz constant~\citep{lu09}]
\label{prop:lip}
The dual gradient $\nabla\dual(U)$ is Lipschitz continuous on $\Uc$ with
Lipschitz constant $L=\lambda_{\max}^2(X^*)(\max_{ij} r_{ij})^2$.
\end{prop}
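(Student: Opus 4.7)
My plan is to compute $\nabla\dual$ explicitly, apply the resolvent identity to rewrite the difference of gradients at two points, bound the resulting factors in spectral norm, and then convert the spectral bound into the stated constant using properties of the feasible set $\Uc$ and the primal optimum $X^{*}$.

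First, I would observe that on the interior of its domain, $\dual(U) = -\log\det(S+U)$, and a standard matrix calculus identity gives $\nabla \dual(U) = -(S+U)^{-1}$. For any $U,V$ such that both $S+U$ and $S+V$ are positive definite, the resolvent identity yields
\begin{equation*}
\nabla\dual(V) - \nabla\dual(U) \;=\; (S+U)^{-1}(U-V)(S+V)^{-1}.
\end{equation*}
Taking Frobenius norms and using the sub-multiplicativity bound $\|ABC\|_F \le \|A\|_2\,\|B\|_F\,\|C\|_2$ gives
\begin{equation*}
\bigl\|\nabla\dual(V) - \nabla\dual(U)\bigr\|_F \;\le\; \lambda_{\max}\bigl((S+U)^{-1}\bigr)\,\lambda_{\max}\bigl((S+V)^{-1}\bigr)\,\|U-V\|_F.
\end{equation*}
So the Lipschitz constant of $\nabla\dual$ is controlled by a uniform upper bound on $\lambda_{\max}((S+U)^{-1})^{2}$ over the relevant $U$.

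Second, I would establish such a uniform bound. At the primal–dual optimum, the previously stated relation $X^{*}(S+U^{*}) = \mi$ gives $\lambda_{\max}((S+U^{*})^{-1}) = \lambda_{\max}(X^{*})$. For arbitrary $U \in \Uc$, the box constraint $|u_{ij}| \le \rho_{ij}$ combined with the standing assumption $S + \Diag(R) \succ 0$ keeps $S+U$ bounded away from singular, and a perturbation argument (tracking how much $\lambda_{\min}(S+U)$ can drop away from $\lambda_{\min}(S+U^{*})$ as $U$ varies over $\Uc$) yields the pointwise bound $\lambda_{\max}((S+U)^{-1}) \le \lambda_{\max}(X^{*}) \cdot \max_{ij} r_{ij}$. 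Substituting into the display above produces the asserted constant $L = \lambda_{\max}^{2}(X^{*})\,(\max_{ij} r_{ij})^{2}$.

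The main obstacle is the second step: a careless spectral estimate would only give something like $1/\lambda_{\min}(S+\Diag(R))^{2}$, which is not tight and does not explicitly feature $X^{*}$. The nontrivial ingredient, which is carried out in \citet{lu09}, is to exploit the joint structure of the constraint $|u_{ij}| \le \rho_{ij}$ and the identity $X^{*} = (S+U^{*})^{-1}$ to relate the worst-case spectral inflation of $(S+U)^{-1}$ on $\Uc$ directly to $\lambda_{\max}(X^{*})$ scaled by $\max_{ij} r_{ij}$. Since the proposition is attributed to that reference, I would invoke the bound from there rather than rederive it, and only verify that the resolvent step above is compatible with the stated constant.
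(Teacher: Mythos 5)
The paper itself offers no proof of Proposition~\ref{prop:lip}: the statement is imported verbatim from \citet{lu09}, so there is no internal argument to compare against, and your proposal has to be judged on its own merits. Your first step is fine and is the standard route: $\nabla\dual(U)=-(S+U)^{-1}$, the resolvent identity, and $\|ABC\|_F\le\|A\|_2\,\|B\|_F\,\|C\|_2$ correctly reduce the claim to a uniform upper bound on $\lambda_{\max}\bigl((S+U)^{-1}\bigr)$ over the relevant set (the sign/order discrepancy in your resolvent display is harmless once norms are taken).

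The gap is in your second step, and it is not merely a deferred technicality. The pointwise bound you assert, $\lambda_{\max}\bigl((S+U)^{-1}\bigr)\le\lambda_{\max}(X^{*})\max_{ij}r_{ij}$, cannot be correct in the regime of interest: evaluating it at $U=U^{*}$, where $(S+U^{*})^{-1}=X^{*}$ by the Optimal theorem you cite, forces $\max_{ij}r_{ij}\ge 1$, whereas the penalties $r_{ij}$ are typically (and are permitted to be) much smaller than $1$. Worse, no uniform bound of any kind holds over all of $\Uc$: the standing assumption $S+\Diag(R)\succ 0$ guarantees one strictly feasible interior point, not positive definiteness of $S+U$ throughout the box (take $S=\mi$ and $\rho_{ij}=10$ for all $i,j$; then $U=-10\,\mi\in\Uc$ yet $S+U=-9\,\mi\prec 0$), so $\lambda_{\max}\bigl((S+U)^{-1}\bigr)$ blows up inside $\Uc$ and $\nabla\dual$ is not even defined there. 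Any correct argument must restrict to a set on which $\lambda_{\min}(S+U)\ge\mu>0$ can be certified (e.g.\ a sublevel set of the dual objective, which is what \citet{lu09} works with), yielding $L=1/\mu^{2}$; relating $\mu$ to $\lambda_{\max}(X^{*})$ and $\max_{ij}r_{ij}$ is precisely the nontrivial content of the cited result. Your proposal defers exactly that content to the reference while interposing an intermediate inequality that is false, so the skeleton is right but the proof is not one.
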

We now prove that $\{U^r\} \rightarrow U^*$ as $r\rightarrow\infty$. To that
end, suppose that the diminishment step $\delta^{c+M} \gets \eta\cdot\delta^c$
is triggered only a finite number of times. Then, there exists a sufficiently
large $K$ such that for all $r\in\kset,\ r\ge K$,
\begin{equation*} 
  \dual(U^r)-\dual(U^{r+1}) \ge \kappa \sum_{i}\sum_{j} \partial_{ij}
  \dual(U^r)\cdot[U^r - U^{r+1}]_{ij}.
\end{equation*}
In such as case, we may view the sequence $\{U^r\}$ as if it were generated by
an ordinary gradient projection scheme, whereby the convergence $\{U^r\} \to
U^*$ follows using standard arguments~\citep{bertsekas}. Therefore, to prove
the convergence of the entire algorithm, it suffices to discuss the case where
the diminishment occurs infinitely often. Corresponding to an infinite
sequence $\{U^r\}$, there is a sequence $\{\delta^r\}$, which by construction
is diminishing. Hence, if we impose the following \emph{unbounded sum
  condition} on this sequence, following~\citep{bertsekas}, we can again
ensure convergence.
\begin{defn}[Diminishing with unbounded sum]
\label{defn:dim}
  \begin{equation*}
    \text{(i)}\ \lim_{r\to\infty}\delta^r = 0,\quad\text{and}\quad
    \text{(ii)}\ \lim_{r\to\infty}\nlsum_{i=1}^r\delta^i = \infty.
  \end{equation*}
\end{defn}
Since in our algorithm, we scale the MBB step~\eqref{eq:rbb} $\sigma^r$ by a
diminishing scalar $\delta^r$, we must ensure that the resulting sequence
$\delta^r\alpha^r$ also satisfies the above condition. This is easy, but for
the reader's convenience formalized below.
\begin{prop}
  \label{prop:step}
  In Algorithm~\ref{alg:sice}, the stepsize $\delta^r\cdot\sigma^r$ satisfies
  the condition~\ref{defn:dim}.
\end{prop}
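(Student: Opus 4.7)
The plan is to separately control the two factors $\delta^r$ and $\sigma^r$ in the product $\delta^r \sigma^r$, using the safeguarding of the modified-BB step for one factor and the diminishment rule for the other. I would begin by noting that the clipping in~\eqref{eq:rbb}, namely $\sigma^k = \mathrm{mid}\{\sigma_{\min}, \tilde{\sigma}, \sigma_{\max}\}$, immediately yields $\sigma_{\min} \le \sigma^r \le \sigma_{\max}$ for every $r$. Consequently, the qualitative behaviour of $\delta^r\sigma^r$ coincides with that of $\delta^r$ up to positive multiplicative constants, and it suffices to prove $\delta^r \to 0$ and $\sum_{r} \delta^r = \infty$.

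For condition (i) of Definition~\ref{defn:dim}, we are in the regime where the diminishment step $\delta^{c+M} \gets \eta \delta^c$ is triggered infinitely often (the alternative case was already dispatched by the gradient-projection argument in the paragraph preceding the proposition). The sequence $\{\delta^r\}$ is therefore non-increasing and is scaled by $\eta \in (0,1)$ on infinitely many occasions. Writing $\delta^r = \delta^0 \eta^{N(r)}$, where $N(r)$ is the number of diminishments up to iteration $r$, we have $N(r) \to \infty$, hence $\delta^r \to 0$. Combined with the upper bound on $\sigma^r$ this gives $\delta^r \sigma^r \le \sigma_{\max}\delta^0\eta^{N(r)} \to 0$.

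Condition (ii) is the substantive part: I must show that the partial sums $\sum_{r} \delta^r$ diverge even though $\delta^r$ is damped geometrically. The key structural fact I would exploit is that diminishment can only be triggered at the checkpoint iterations in $\kset$, so $\delta^r$ is held constant for at least $M$ consecutive iterations between any two diminishments. Let $\tau_k$ be the number of checkpoint blocks during which $\delta^r$ equals $\delta^0\eta^k$; then
\begin{equation*}
\sum_{r} \delta^r \;=\; M\,\delta^0\sum_{k\ge 0}\tau_k\,\eta^k.
\end{equation*}
The main obstacle will be a quantitative argument that $\tau_k$ grows fast enough, in fact at least like $\eta^{-k}$, for this series to diverge. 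I expect this to follow from the Lipschitz continuity of $\nabla\dual$ (Proposition~\ref{prop:lip}) together with a second-order expansion of $\dual$ around $U^c$: once the effective step $\delta^c\sigma^c$ is small enough, the quadratic remainder in the expansion is dominated by the linear (gradient) term, which forces the descent condition~\eqref{eq:descent} to hold automatically with the given $\kappa \in (0,1)$. Since $\sigma^c \le \sigma_{\max}$, this automatic satisfaction kicks in once $\delta^0\eta^k$ drops below an explicit threshold depending on $L$, $\sigma_{\max}$, and $\kappa$, which suppresses further diminishments for an arbitrarily long stretch and thereby drives $\tau_k \to \infty$ at the required geometric rate.

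With (i) and (ii) both verified for $\delta^r$, and $\sigma^r$ sandwiched between positive constants, the proposition follows by substituting back: $\delta^r\sigma^r \to 0$ from (i) and $\sum_r\delta^r\sigma^r \ge \sigma_{\min}\sum_r\delta^r = \infty$ from (ii).
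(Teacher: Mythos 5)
Your opening step---bounding $\sigma^r$ between $\sigma_{\min}$ and $\sigma_{\max}$ via the safeguard in~\eqref{eq:rbb} and thereby reducing the claim to the two conditions of Definition~\ref{defn:dim} for $\{\delta^r\}$ alone---is precisely the paper's entire proof; the paper then simply asserts that $\delta^r\to 0$ and $\sum_r\delta^r=\infty$ hold ``by construction.'' You are right to be suspicious of that assertion for condition (ii): if the diminishment $\delta\gets\eta\delta$ were triggered at every checkpoint, one would get $\sum_r\delta^r=M\delta^1\sum_{k\ge 0}\eta^k<\infty$, so the unbounded-sum property is emphatically \emph{not} automatic, and your attempt to control the number $\tau_k$ of blocks spent at each level $\delta^0\eta^k$ is the right question to ask. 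In this respect you have located a genuine weakness in the paper's own argument.

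However, your treatment of (ii) is a plan rather than a proof, and the mechanism you propose does not cohere with the setting. You argue that once $\delta^c\sigma^c$ falls below a threshold of order $(1-\kappa)/L$, the descent test~\eqref{eq:descent} passes automatically, and you use this to claim $\tau_k\to\infty$ at a geometric rate. But if the test passes automatically below that threshold, then $\delta$ is never diminished again after finitely many reductions: $\tau_{k_0}=\infty$ for some finite $k_0$. That contradicts the standing hypothesis of this branch of the convergence analysis (diminishment occurring infinitely often) and simultaneously destroys condition (i), since $\delta^r$ would then stabilize at a positive value rather than tend to $0$. What your mechanism would actually show, if made rigorous, is that the infinitely-often case is vacuous---a different (and arguably cleaner) resolution, but not the statement of the proposition; and even that requires an argument you have not supplied, because~\eqref{eq:descent} pairs $\nabla\dual(U^c)$ with the aggregate displacement $U^c-U^{c+M}$ over a block of $M$ MBB steps rather than with a single projected-gradient step, so the standard one-step Armijo/descent-lemma calculation does not transfer directly. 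As written, condition (ii) is not established; condition (i) and the final sandwiching by $\sigma_{\min}$ and $\sigma_{\max}$ are fine and coincide with the paper's proof.
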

\begin{proof}
  Since $\lim_{r\to\infty}\delta^r = 0$ and
  $\lim_{r\to\infty}\nlsum_{i=1}^r\delta^i = \infty$ by construction, 
  \begin{equation*}
    \lim_{r\to\infty}\delta^r\cdot\sigma^r\ \le\
    \sigma_{\max}\cdot \lim_{r\to\infty}\delta^r = 0
\quad\text{and}\quad
    \lim_{r\to\infty}\sum_{i=1}^r\delta^i\cdot\sigma^i\ \ge\
    \sigma_{\min}\cdot \lim_{r\to\infty}\sum_{i=1}^r\delta^i
    = \infty.\qedhere
  \end{equation*}
\end{proof}
Using this proposition we can finally state the main convergence theorem. Our
proof is based on that of gradient-descent; we adapt it for our problem by
showing some additional properties of $\{U^r\}$.

\begin{theorem}[Convergence]
  The sequence of iterates $(X^k, U^k)$ generated by Algorithm~1 converges to
  the optimum solution $(X^*,U^*)$.
\end{theorem}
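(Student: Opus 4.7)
The plan is to carry out the case split that the text has already set up, namely on whether the optimistic diminishment step $\delta^{c+M}\gets \eta\cdot\delta^c$ is triggered finitely often or infinitely often, and to apply a standard convergence theorem in each case. Before splitting, I would record two facts that hold throughout: (a) the feasible set $\Uc=\{U:|u_{ij}|\le \rho_{ij}\}$ is compact and convex, so $\{U^r\}_{r\in\kset}$ lies in a compact set; and (b) by Proposition~\ref{prop:lip}, $\nabla\dual$ is $L$-Lipschitz on $\Uc$, and because the safe-guards force $\sigma^r\in[\sigma_{\min},\sigma_{\max}]$, the effective stepsize $\delta^r\sigma^r$ is bounded above by $\sigma_{\max}\delta^0$. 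I would also note that the positive-definite repair at a checkpoint, together with $\projU{\cdot}$, keeps $S+U^r\succ 0$, so that $X^r=-\nabla\dual(U^r)=(S+U^r)^{-1}$ is well defined and continuous in $U^r$.

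Case 1 (diminishment finite). Past some index $K$, each checkpoint iterate satisfies the descent condition~\eqref{eq:descent}, which is a sufficient-descent inequality of Armijo type for the checkpoint subsequence. The sequence $\{\dual(U^r)\}_{r\in\kset,\,r\ge K}$ is then monotonically decreasing and bounded below by $\dual(U^*)$, hence convergent; telescoping~\eqref{eq:descent} yields $\sum_r\sum_{i,j}\partial_{ij}\dual(U^r)\,[U^r-U^{r+1}]_{ij}<\infty$, which combined with $\delta^r\sigma^r$ bounded away from zero in this case forces the projected-gradient residual $\|U^r-\projU{U^r-\delta^r\sigma^r\nabla\dual(U^r)}\|\to 0$. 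Standard arguments for gradient projection (\citep{bertsekas}) then show every limit point is first-order optimal for~\eqref{eq:15}. By strict convexity of the primal (Theorem~1), the dual optimum is unique up to the constraint boundary, and combined with compactness of $\Uc$ this gives $U^r\to U^*$; continuity of $U\mapsto (S+U)^{-1}$ then yields $X^r\to X^*$.

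Case 2 (diminishment infinite). Now $\delta^r\to 0$ with $\sum_r \delta^r=\infty$ by construction, and by Proposition~\ref{prop:step} the effective stepsize $\delta^r\sigma^r$ inherits both properties of Definition~\ref{defn:dim}. Iteration~\eqref{eq:asbb} is then a projected-gradient scheme on the convex set $\Uc$ with $L$-Lipschitz gradient and a diminishing, unbounded-sum stepsize. The classical convergence theorem for gradient projection with diminishing stepsize for convex objectives (e.g., \citep{bertsekas}, Prop.~2.3.1) applies: along the sequence one has $\dual(U^{r+1})\le \dual(U^r)-c_1\delta^r\sigma^r\|U^r-U^*\|\cdot(\text{stationarity gap})+c_2 L(\delta^r\sigma^r)^2\|\nabla\dual(U^r)\|^2$, and summing, the square-summable error term dominates the $(\delta^r)^2$ contribution while the unbounded-sum term forces the stationarity gap to zero on a subsequence; compactness of $\Uc$ upgrades this to $U^r\to U^*$, and again continuity gives $X^r\to X^*$.

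The main obstacle is verifying that the two non-standard ingredients of the algorithm, namely the subspace restriction of the BB ratio to coordinates outside $\bset(U^k)$, and the checkpoint repair used to restore $S+\tilde U^M\succ 0$, do not invalidate the two templates invoked above. For the BB restriction, the point is simply that the $\mathrm{mid}\{\sigma_{\min},\tilde\sigma,\sigma_{\max}\}$ clamp decouples convergence from the particular value of $\tilde\sigma$, so the subspace choice is safe; the remaining work is to argue that the repair step at a checkpoint can be implemented so that it both lies in $\Uc$ and produces a descent direction (so that the ``$\tilde U^M$ satisfies~\eqref{eq:descent}'' branch is meaningful), after which the case split above closes the proof.
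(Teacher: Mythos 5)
Your plan follows the paper's proof essentially verbatim: the same split on whether the optimistic diminishment fires finitely or infinitely often, with the finite case dispatched as an ordinary gradient-projection scheme and the infinite case handled via the diminishing/unbounded-sum stepsize (Proposition~\ref{prop:step}) together with the Lipschitz constant (Proposition~\ref{prop:lip}), culminating in an appeal to the standard gradient-method convergence result in \citep{bertsekas}. The one concrete ingredient the paper supplies that you should make explicit is the pair of gradient-relatedness bounds on the projected direction $D^r=(U^{r+1}-U^r)/(\delta^r\sigma^r)$, namely $-\sum_{i,j}\partial_{ij}\dual(U^r)\,D^r_{ij}\ \ge\ c_1\|\nabla\dual(U^r)\|_F^2$ and $\|D^r\|_F^2\ \le\ c_2\|\nabla\dual(U^r)\|_F^2$, which is precisely what licenses that appeal and should replace the non-standard descent inequality you wrote in Case~2.
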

\begin{proof}
  Consider the update~\eqref{eq:asbb} of Algorithm~\ref{alg:sice}, we can
  rewrite it as
  \begin{equation} 
    \label{eq:thm1}
    U^{r+1} = U^{r} + \delta^r\cdot\sigma^{r} D^r,
  \end{equation}
  where the \emph{descent direction} $D^r$ satisfies
  \begin{equation}
    \label{eq:thm2}
    D_{ij}^r = \begin{cases}
      0, & \text{if}\ (i,j) \in \bset(U^r), \\
       -\mathrm{mid}\left\{
         \frac{\displaystyle U_{ij}^r - \rho_{ij}}{\displaystyle \delta^r\cdot\sigma^r},\ 
         \frac{\displaystyle U_{ij}^r + \rho_{ij}}{\displaystyle \delta^r\cdot\sigma^r},\ 
         \partial_{ij} \dual(U^r)\right\}, &
       \text{otherwise}. 

    \end{cases}
\end{equation} 
Using~\eqref{eq:thm2} and the fact that there exists at least one
$\abs{\partial_{ij} \dual(U^r)} > 0$ unless $U^r=U^*$, we can conclude that
there exists a constant $c_1 > 0$ such that
\begin{equation}
\label{eq:thm3}
  -\sum_{i}\sum_{j}\partial_{ij} \dual(U^r)\cdot D_{ij}^r\ \ge\
  \sum_{(i,j)\notin\bset(U^r)} 
  m_{ij}^2 \ \ge\ c_1\norm{\nabla \dual(U^r)}{F}^2\ >\ 0,
\end{equation}
where $m_{ij} = \mathrm{mid}\Big\{
\frac{U_{ij}^r-\rho_{ij}}{ \delta^r\cdot\sigma^r},\ 
\frac{U_{ij}^r+\rho_{ij}}{ \delta^r\cdot\sigma^r},\ 
\partial_{ij} \dual(U^r)\Big\}$.\\
Similarly, we can also show that there
exists $c_2>0$ such that
\begin{equation}
  \label{eq:thm4}
  \norm{D^r}{F}^2 \le c_2 \norm{\nabla \dual(U^r)}{F}^2.
\end{equation}
Finally, using the inequalities~\eqref{eq:thm3} and \eqref{eq:thm4}, in
conjunction with Proposition~\ref{prop:lip},~\ref{prop:step}, the proof is
immediate from Proposition~1.2.4 in~\citep{bertsekas}.
\end{proof}

\section{Numerical Results}
\label{sec:expts}
\vspace*{-5pt}%
We present numerical results on both synthetic as well as real-world data. We
begin with the synthetic setup, which helps to position our method via-á-vis
the other competing approaches.  We compare against algorithms that represent
the state-of-the-art in solving the \sice problem. Specifically, we compare
the following 5 methods:
\begin{inparaenum}[(i)]
\setlength{\itemsep}{0pt}
\item \pg: the projected gradient algorithm of~\citep{duchi};\footnote{\scriptsize Impl.}
\item \aspg: the very recent adaptive spectral projected gradient algorithm~\citep{lu10};\footnotemark
\item \anes: an adaptive, Nesterov style algorithm from the \aspg
  paper~\citep{lu10};\footnotemark[2]\footnotetext{\scriptsize Downloaded
    from}
\item \scov: a sophisticated smooth optimization scheme built on Nesterov's
  techniques~\citep{lu09};\footnote{\scriptsize Downloaded from} and
\item \sice: our proposed algorithm.
\end{inparaenum}

We note at this point that although due to space limits our experimentation is
not exhaustive, it is fairly extensive. Indeed, we also tried several other
baseline methods such as: gradient projection with Armijo-search
rule~\citep{bertsekas}, limited memory projected quasi-Newton
approaches~\citep{markpqn,lbfgs}, among others. But all these approaches
performed across wide range of problems similar to or worse than \pg. Hence,
we do not report numerical results against them. We further note that
\citet{lu09} reports the \scov method vastly outperforms block-coordinate
descent (BCD) based approaches such as~\citep{aspre08}, or even the graphical
lasso FORTRAN code of~\citet{glasso}. Hence, we also omit BCD methods from our
comparisons. 

\vspace*{-5pt}
\subsection{Synthetic Data Experiments}
\label{sec:synthExpts}
\vspace*{-5pt}%
We generated random instances in the same manner as~\citet{aspre08}; the same
data generation scheme was also used by~\citet{lu09,lu10}, and to be fair to
them, we in fact used their \matlab code to generate our data (this code is
included in the supplementary material for the interested reader). First one
samples a sparse, invertible matrix $S' \in \sym^n$, with ones on the
diagonal, and having a desired density $\delta$ obtained by randomly adding
$+1$ and $-1$ off-diagonal entries. Then one perturbs $S'$ to obtain $S'' =
(S')^{-1}+ \tau N$, where $N \in \sym^n$ is an i.i.d.\ uniformly random
matrix. Finally, one obtains $S$, the random sample covariance matrix by
shifting the spectrum of $S''$ by
\begin{equation*}
  S = S'' - \min\set{\lambda_{\min}(S'')-\varsigma, 0}I,
\end{equation*}
where $\varsigma > 0$ is a small scalar. The specific parameter values used
were the same as in~\citep{lu09}, namely: $\delta=.01$, $\tau=.15$, and
$\varsigma=10^{-4}$. We generated matrices of size $100m \times 100m$, for
$m \in [1\ldots 10] \cup \set{16,20}$.

\begin{table}[htbp]\scriptsize
  \centering
  \begin{tabular}{cc}
    \begin{tabular}{r||r|r|r|r|>{\bf}r}
      $\epsilon$ & PG & ASPG & ANES & SCOV & SICE\\
      \hline
      1E-01 &8.46 & 38.97 & 48.05 & 21.75 & 2.28 \\
      1E-02 &34.89 & 72.25 & 100.61 & 46.10 & 4.74 \\
      1E-03 &127.37 & 142.47 & 214.56 & 98.73 & 7.61 \\
      1E-04 &807.16 &519.37 & 482.97 & 224.58 & 17.30 \\
      2E-05 &- &1914.33 & 862.86 & 409.06 & 42.38 \\
      1E-05 &- &2435.14 & 1149.04 & 548.34 & 52.29 \\
      \hline
    \end{tabular}&
    \begin{tabular}{r||r|r|r|r|>{\bf}r}
      $\epsilon$ & PG & ASPG & ANES & SCOV & SICE\\
      \hline
      1E-01 &128.61 & 1005.98 & 1818.00 & 1858.70 & 66.08 \\
      1E-02 &358.90 & 1712.01 & 3173.55 & 3216.55 & 102.48 \\
      1E-03 &833.86 & 3529.29 & 5411.49 & 5085.51 & 170.25 \\
      1E-04 &1405.56 & 5008.81 & 9528.91 & 9046.04 & 234.18 \\
      2E-05 &- &6420.18 & 15964.26 & 15428.23 & 307.67 \\
      1E-05 &- &7562.52 & 19812.06 & 19621.27 & 336.25 \\
      \hline
    \end{tabular}
  \end{tabular}
  \caption{\footnotesize Time (secs) for reaching desired duality
    gap. Left, $S \in \sym_+^{300}$, and right, $S \in \sym_+^{1000}$. Since all
    algorithms decrease the duality-gap non-monotonically, we  
    show the time-points when a prescribed $\epsilon$ is first attained.} 
  \label{tab:synth.gap}
\end{table}

Table~\ref{tab:synth.gap} shows two sample results (more extensive results are
available in the supplementary material). All the algorithms were run with a
convergence tolerance of $\epsilon=2\cdot 10^{-5}$; this tolerance is very
tight, as previously reported results usually used $\epsilon=.1$ or
$.01$. Table~\ref{tab:synth.gap} reveals two main results: (i) the advanced
algorithms of~\citep{lu09,lu10} are outperformed by both \pg and \sice; and
(ii) \sice vastly outperforms all other methods. The first result can be
attributed to the algorithmic simplicity of \pg and \sice, because \aspg,
\anes, and \scov employ a complex, more expensive algorithmic structure: they
repeatedly use eigendecompositions. The second result, that is, the huge
difference in performance between \pg and \sice (see
Table~\ref{tab:synth.gap}, second sub-table, fourth and fifth rows) is also
easily explained. Although the step-size heuristic used by \pg works very-well
for obtaining low accuracy solutions, for slightly stringent convergence
criteria its progress becomes much too slow. \sice on the other hand, exploits
the gradient information to rapidly identify the active-set, and thereby
exhibits faster convergence. In fact, \sice turns out to be even faster for
low accuracy ($\epsilon=.1$) solutions.

The convergence behavior of the various methods is illustrated in more detail
in Figure~\ref{fig:synth.gap}. The left panel plots the duality gap attained
as a function of time, which makes the performance differences between the
methods starker: \sice converges rapidly to a tight tolerance in a
\emph{fraction of the time} needed by other methods. The right panel
summarizes performance profiles as the input matrix size $n \times n$ is
varied. Even as a function of $n$, \sice is seen to be overall much faster
(y-axis is on a log scale) than other methods.
\begin{figure}[htbp]
  \centering
  \begin{tabular}{@{\hspace{-1pt}}l@{\hspace{5pt}}l||l@{\hspace{5pt}}l}
    \includegraphics[scale=0.23]{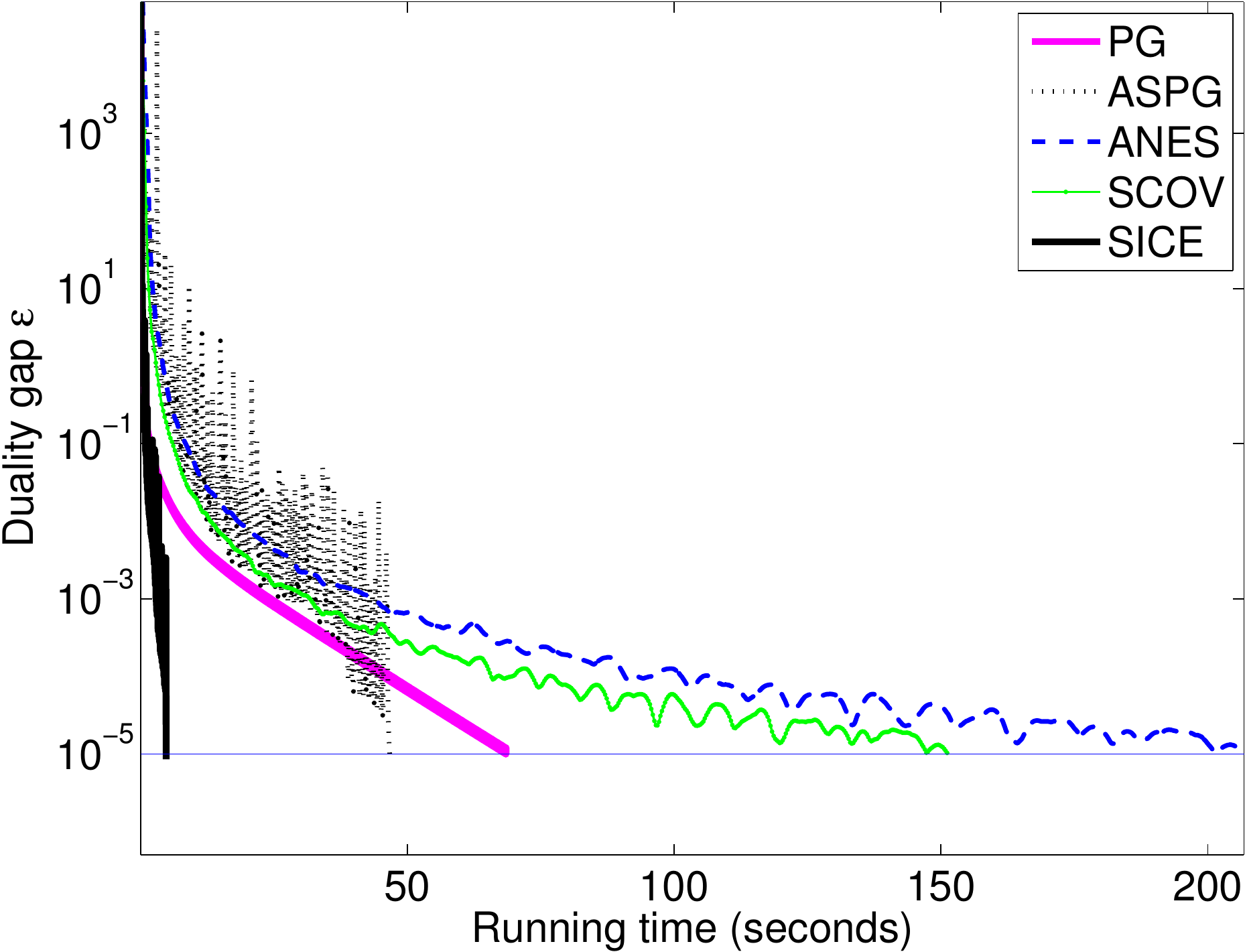} &
    \includegraphics[scale=0.23]{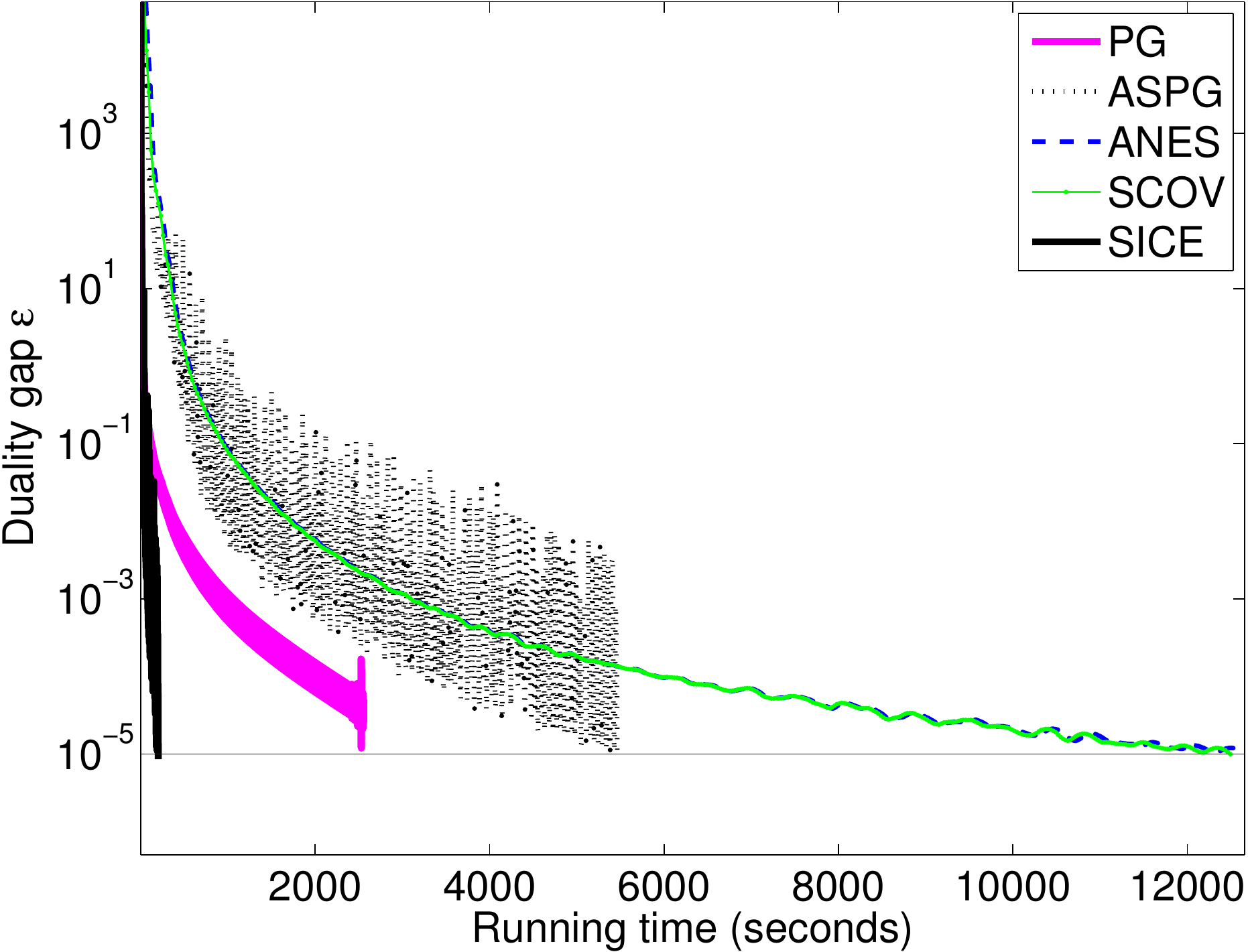} &
    \includegraphics[scale=0.2]{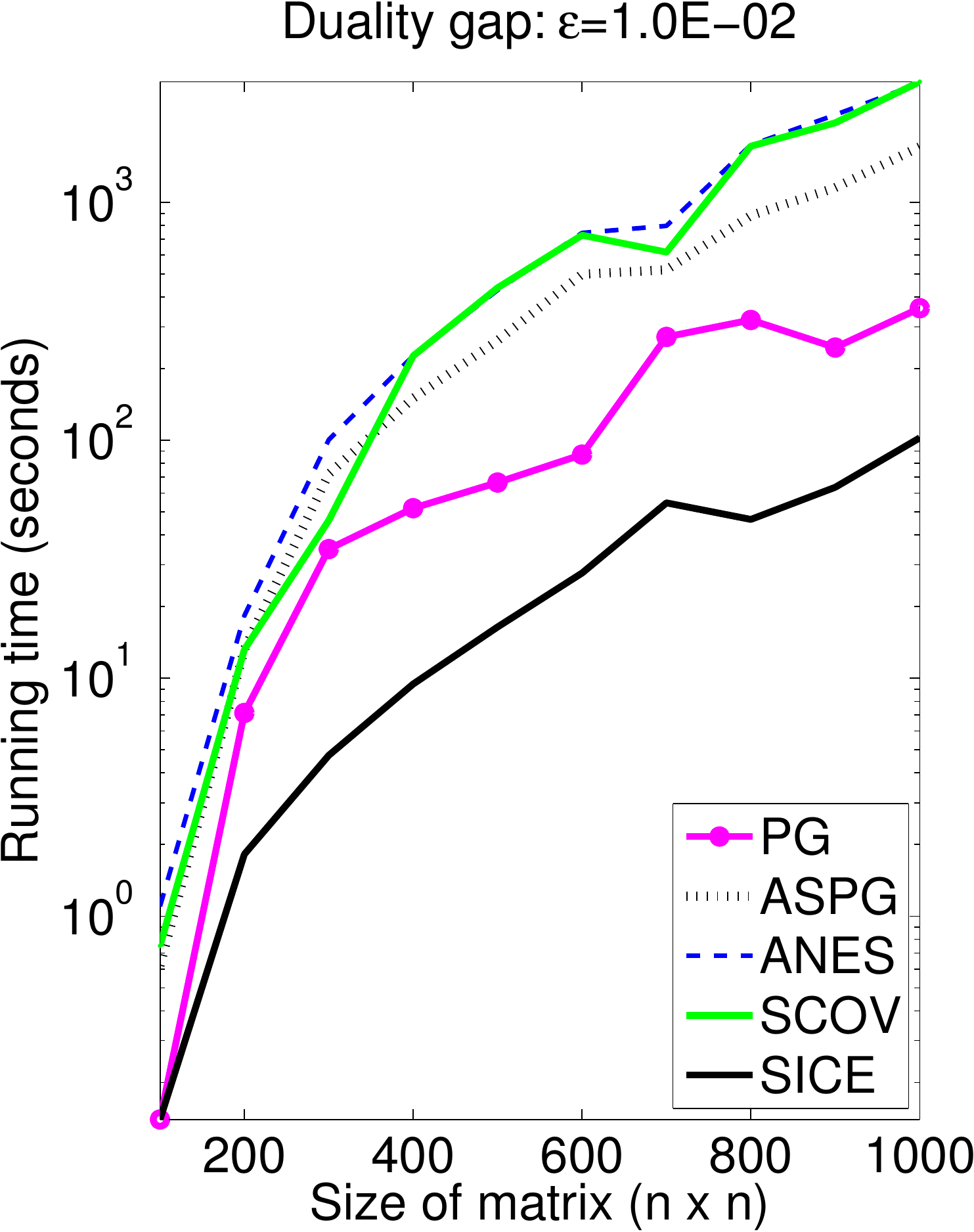} &
    \includegraphics[scale=0.2]{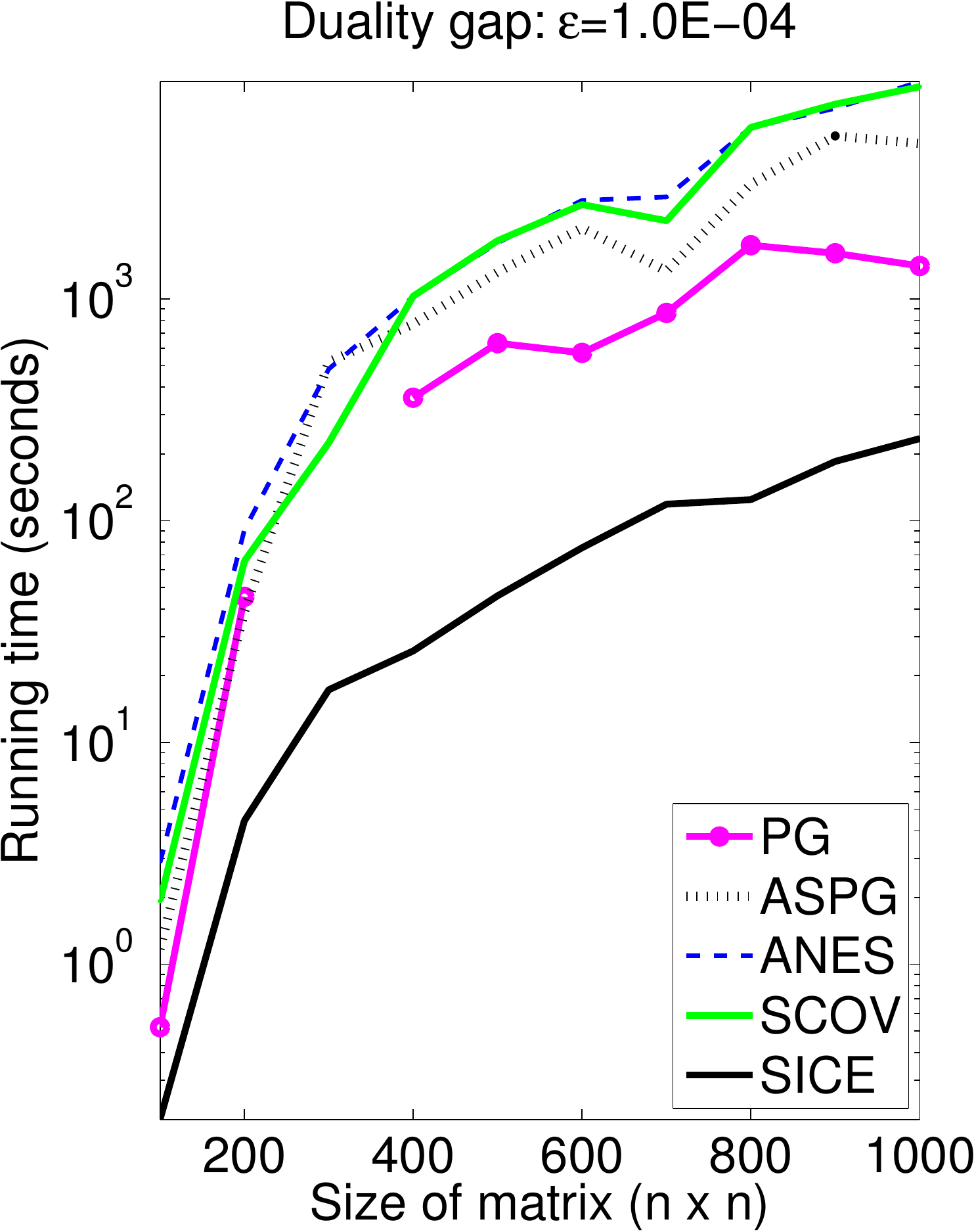}\\
    \hline
    \hline
  \end{tabular}
  \caption{\small Left panel: duality gap attained as a function of running
    time (in seconds); Right panel: Time taken to reach a prescribed duality
    gap as a function of matrix $S$'s size.}
\label{fig:synth.gap}
\end{figure}

\section{Conclusions and future work}
We developed, analyzed, and experimented with a powerful new algorithm for
estimating the sparse inverse covariance matrices. Our method was shown to
outperform all the competing state-of-the-art approaches. We attribute the
speedup gains to our careful algorithmic design, which also shows that even
though the spectral projected gradient method is fast and often successful,
invoking it out-of-the-box can be a suboptimal approach in scenarios where the
constraints were simple. In such as setup, our modified spectral approach
turns out to afford significant empirical gains. 

At this point several avenues of future work are open. We list them in order
of difficulty:
\begin{inparaenum}[(i)]
\item Implementing our method in a multi-core or GPU setting;
\item Extending our approach to handle more general constraints;
\item Deriving an even more efficient algorithm that only occasionally
  computes gradients;
\item Deriving a method for \sice that does not need to explicitly compute
  matrix inverses.
\end{inparaenum}

{\small
  \setlength{\bibsep}{0pt}

}

\appendix
\section{Additional numerical results}
\begin{minipage}{1.0\linewidth}
  \scriptsize\centering
  \begin{tabular}{cc}
    \begin{tabular}{r||r|r|r|r|>{\bf}r}
      $\epsilon$ & PG & ASPG & ANES & SCOV & SICE\\
      \hline
      1E-01 &0.09 & 0.51 & 0.74 & 0.51 & 0.12 \\
      1E-02 &0.14 & 0.62 & 1.10 & 0.75 & 0.14 \\
      1E-03 &0.26 & 0.94 & 1.73 & 1.16 & 0.16 \\
      1E-04 &0.52 & 1.17 & 2.87 & 1.93 & 0.20 \\
      2E-05 &0.70 & 1.23 & 3.77 & 2.52 & 0.20 \\
      1E-05 &0.78 & 1.26 & 4.31 & 2.86 & 0.20 \\
      \hline
    \end{tabular}
    &
    \begin{tabular}{r||r|r|r|r|>{\bf}r}
      $\epsilon$ & PG & ASPG & ANES & SCOV & SICE\\
      \hline
      1E-01 &1.70 & 6.40 & 8.48 & 6.04 & 0.94 \\
      1E-02 &7.15 & 13.23 & 18.33 & 13.21 & 1.82 \\
      1E-03 &22.79 & 27.00 & 43.98 & 31.79 & 2.97 \\
      1E-04 &45.38 & 38.91 & 90.84 & 65.76 & 4.44 \\
      2E-05 &61.46 & 46.53 & 162.22 & 118.36 & 4.80 \\
      1E-05 &68.50 & 46.69 & 206.81 & 151.32 & 4.81 \\
      \hline
    \end{tabular}
    \\
    $100 \times 100$ & $200 \times 200$
  \end{tabular}
\end{minipage}
\begin{minipage}{1.0\linewidth}
  \scriptsize\centering
  \begin{tabular}{cc}
    \begin{tabular}{r||r|r|r|r|>{\bf}r}
      $\epsilon$ & PG & ASPG & ANES & SCOV & SICE\\
      \hline
      1E-01 &8.46 & 38.97 & 48.05 & 21.75 & 2.28 \\
      1E-02 &34.89 & 72.25 & 100.61 & 46.10 & 4.74 \\
      1E-03 &127.37 & 142.47 & 214.56 & 98.73 & 7.61 \\
      1E-04 &807.16 &519.37 & 482.97 & 224.58 & 17.30 \\
      2E-05 &- &1914.33 & 862.86 & 409.06 & 42.38 \\
      1E-05 &- &2435.14 & 1149.04 & 548.34 & 52.29 \\
      \hline
    \end{tabular}
    &
    \begin{tabular}{r||r|r|r|r|>{\bf}r}
      $\epsilon$ & PG & ASPG & ANES & SCOV & SICE\\
      \hline
      1E-01 &16.27 & 84.71 & 109.95 & 109.04 & 5.42 \\
      1E-02 &51.89 & 150.26 & 226.52 & 226.34 & 9.46 \\
      1E-03 &139.73 & 338.29 & 471.29 & 473.33 & 15.59 \\
      1E-04 &357.93 & 769.44 & 1020.82 & 1027.13 & 25.84 \\
      2E-05 &529.69 & 997.44 & 1776.06 & 1793.75 & 28.43 \\
      1E-05 &603.84 & 1099.55 & 2221.23 & 2106.44 & 30.52 \\
      \hline
    \end{tabular}
    \\
    $300 \times 300$ & $400 \times 400$
  \end{tabular}
\end{minipage}
\begin{minipage}{1.0\linewidth}
  \scriptsize\centering
  \begin{tabular}{cc}
    \begin{tabular}{r||r|r|r|r|>{\bf}r}
      $\epsilon$ & PG & ASPG & ANES & SCOV & SICE\\
      \hline
      1E-01 &25.28 & 153.41 & 209.90 & 218.42 & 9.55 \\
      1E-02 &66.44 & 262.69 & 428.14 & 437.31 & 16.42 \\
      1E-03 &307.73 & 731.48 & 901.46 & 911.02 & 29.82 \\
      1E-04 &631.33 & 1318.33 & 1788.93 & 1828.57 & 45.81 \\
      2E-05 &815.42 & 1615.56 & 3069.17 & 3117.44 & 61.81 \\
      1E-05 &- &1719.70 & 3821.50 & 3853.45 & 66.20 \\
      \hline
    \end{tabular}
    &
    \begin{tabular}{r||r|r|r|r|>{\bf}r}
      $\epsilon$ & PG & ASPG & ANES & SCOV & SICE\\
      \hline
      1E-01 &28.36 & 275.36 & 391.94 & 379.74 & 16.02 \\
      1E-02 &87.15 & 499.69 & 744.94 & 728.40 & 27.69 \\
      1E-03 &244.11 & 1082.84 & 1446.27 & 1365.46 & 49.70 \\
      1E-04 &570.52 & 2088.16 & 2777.44 & 2660.26 & 75.53 \\
      2E-05 &848.27 & 2952.95 & 5049.04 & 4873.74 & 88.16 \\
      1E-05 &972.13 & 3379.78 & 6383.01 & 6130.96 & 99.84 \\
      \hline
    \end{tabular}
    \\
    $500 \times 500$ & $600 \times 600$
  \end{tabular}
\end{minipage}
\begin{minipage}{1.0\linewidth}
  \scriptsize\centering
  \begin{tabular}{cc}
    \begin{tabular}{r||r|r|r|r|>{\bf}r}
      $\epsilon$ & PG & ASPG & ANES & SCOV & SICE\\
      \hline
      1E-01 &96.06 & 291.24 & 422.53 & 327.12 & 28.36 \\
      1E-02 &271.98 & 520.65 & 796.86 & 618.35 & 54.58 \\
      1E-03 &496.38 & 925.59 & 1480.13 & 1155.40 & 80.02 \\
      1E-04 &863.83 & 1332.72 & 2878.07 & 2251.48 & 118.62 \\
      2E-05 &1133.74 & 2042.25 & 4776.05 & 3736.78 & 134.46 \\
      1E-05 &1162.10 & 2077.83 & 6106.81 & 4614.11 & 141.79 \\
      \hline
    \end{tabular}
    &
    \begin{tabular}{r||r|r|r|r|>{\bf}r}
      $\epsilon$ & PG & ASPG & ANES & SCOV & SICE\\
      \hline
      1E-01 &110.74 & 609.90 & 956.90 & 942.62 & 30.17 \\
      1E-02 &320.52 & 880.04 & 1742.28 & 1720.76 & 46.54 \\
      1E-03 &785.29 & 1742.65 & 3106.48 & 3099.92 & 73.06 \\
      1E-04 &1742.95 & 3287.96 & 5931.33 & 5925.93 & 124.42 \\
      2E-05 &2523.65 & 4557.51 & 10083.89 & 10018.28 & 175.97 \\
      1E-05 &- &5470.64 & 12656.09 & 12494.52 & 199.07 \\
      \hline
    \end{tabular}
    \\
    $700 \times 700$ & $800 \times 800$
  \end{tabular}
\end{minipage}
\begin{minipage}{1.0\linewidth}
  \scriptsize\centering
  \begin{tabular}{cc}
    \begin{tabular}{r||r|r|r|r|>{\bf}r}
      $\epsilon$ & PG & ASPG & ANES & SCOV & SICE\\
      \hline
      1E-01 &88.40 & 635.09 & 1306.82 & 1217.88 & 32.21 \\
      1E-02 &245.25 & 1153.18 & 2331.24 & 2158.39 & 63.42 \\
      1E-03 &816.69 & 2862.34 & 4053.39 & 3869.56 & 108.35 \\
      1E-04 &1604.22 & 5424.09 & 7216.28 & 7538.87 & 184.97 \\
      2E-05 &2117.68 & 6974.25 & 11973.47 & 12710.25 & 211.80 \\
      1E-05 &2117.68 & 7907.47 & 15083.79 & 15784.19 & 242.39 \\
      \hline
    \end{tabular}
    &
    \begin{tabular}{r||r|r|r|r|>{\bf}r}
      $\epsilon$ & PG & ASPG & ANES & SCOV & SICE\\
      \hline
      1E-01 &128.61 & 1005.98 & 1818.00 & 1858.70 & 66.08 \\
      1E-02 &358.90 & 1712.01 & 3173.55 & 3216.55 & 102.48 \\
      1E-03 &833.86 & 3529.29 & 5411.49 & 5085.51 & 170.25 \\
      1E-04 &1405.56 & 5008.81 & 9528.91 & 9046.04 & 234.18 \\
      2E-05 &- &6420.18 & 15964.26 & 15428.23 & 307.67 \\
      1E-05 &- &7562.52 & 19812.06 & 19621.27 & 336.25 \\
      \hline
    \end{tabular}
    \\
    $900 \times 900$ & $1000 \times 1000$
  \end{tabular}
\end{minipage}

  
\begin{tabular}{cc}
  \includegraphics[scale=0.4]{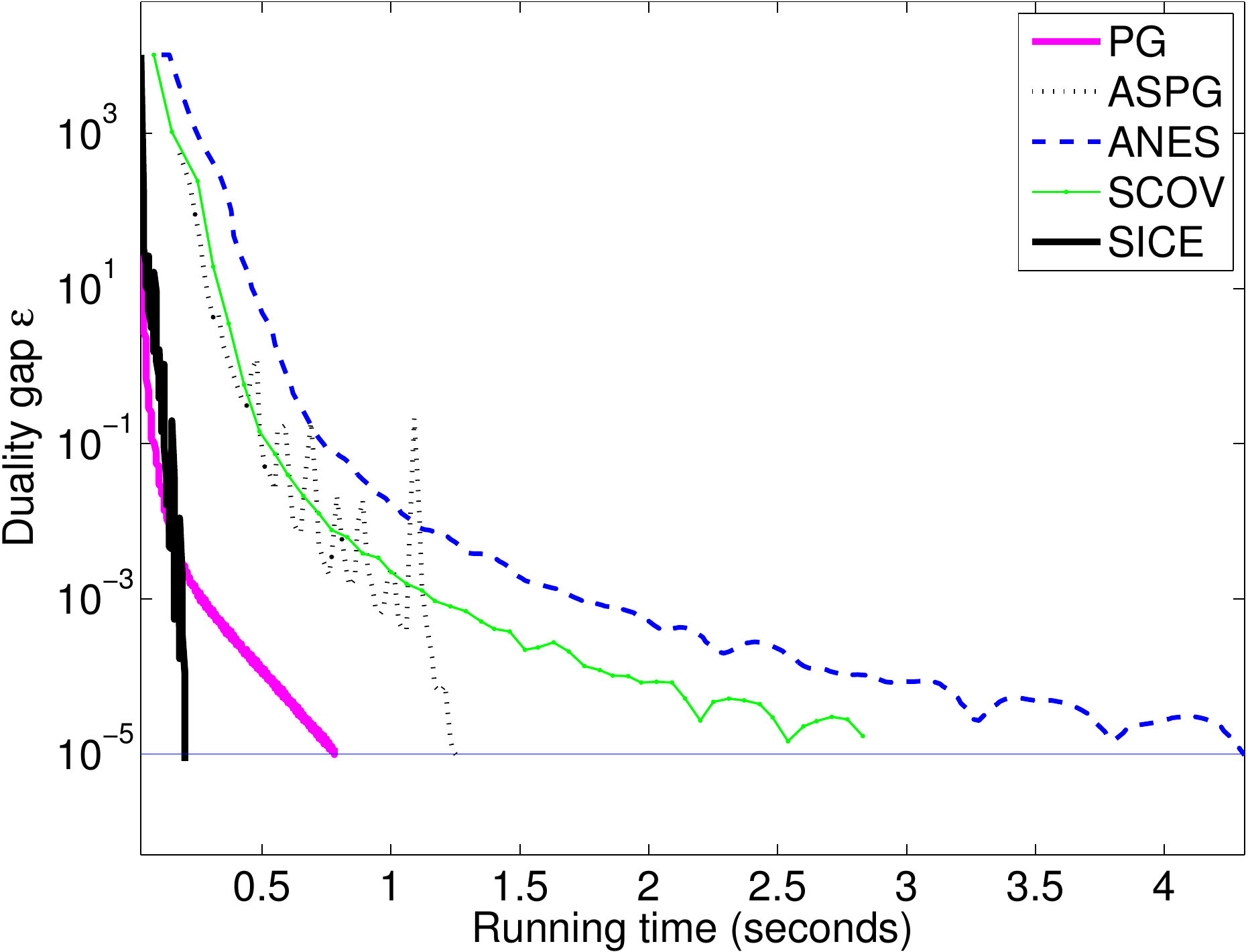} &
  \includegraphics[scale=0.4]{synth_2} \\
  $n=100$ & $n=200$
\end{tabular}

\begin{tabular}{cc}
  \includegraphics[scale=0.4]{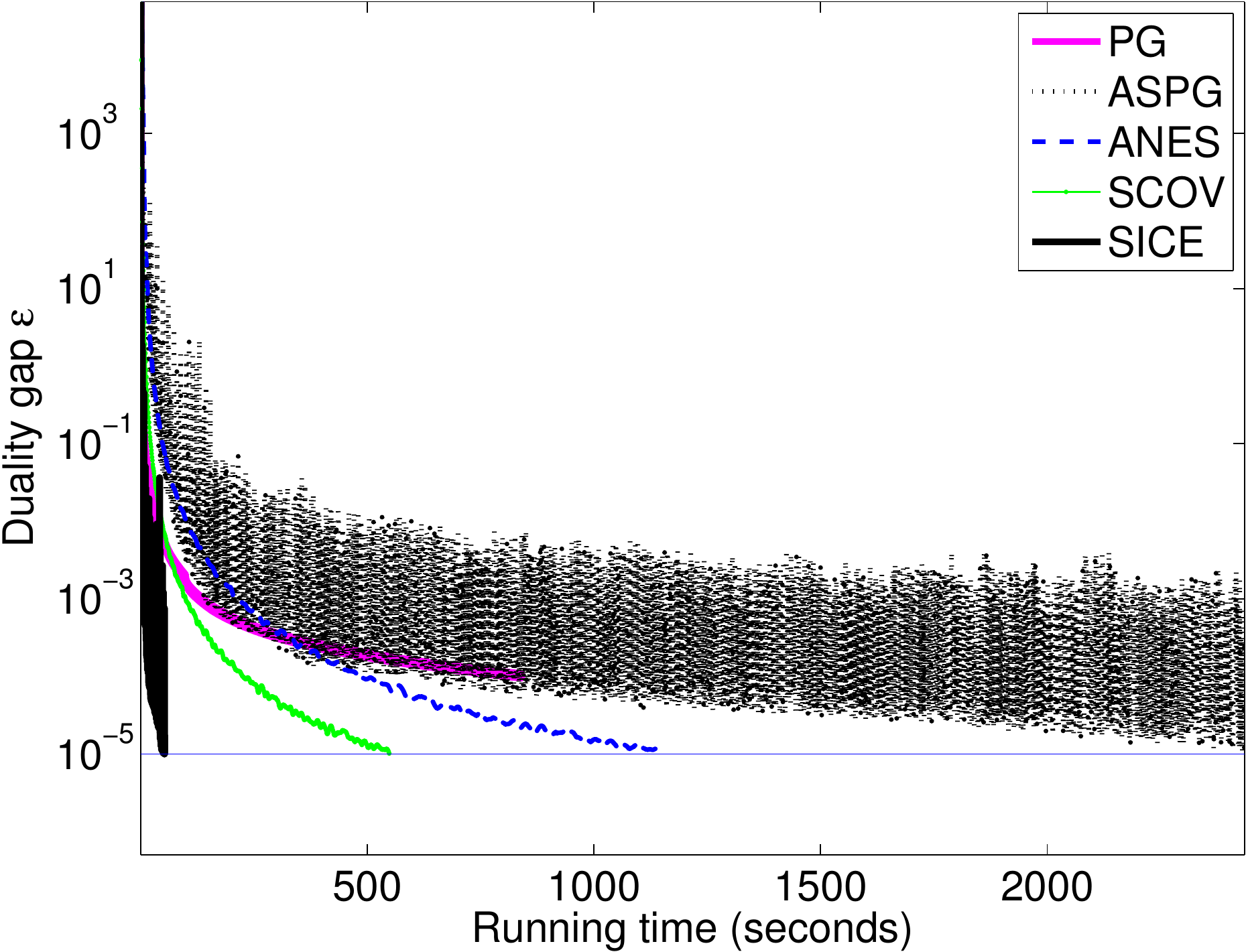} &
  \includegraphics[scale=0.4]{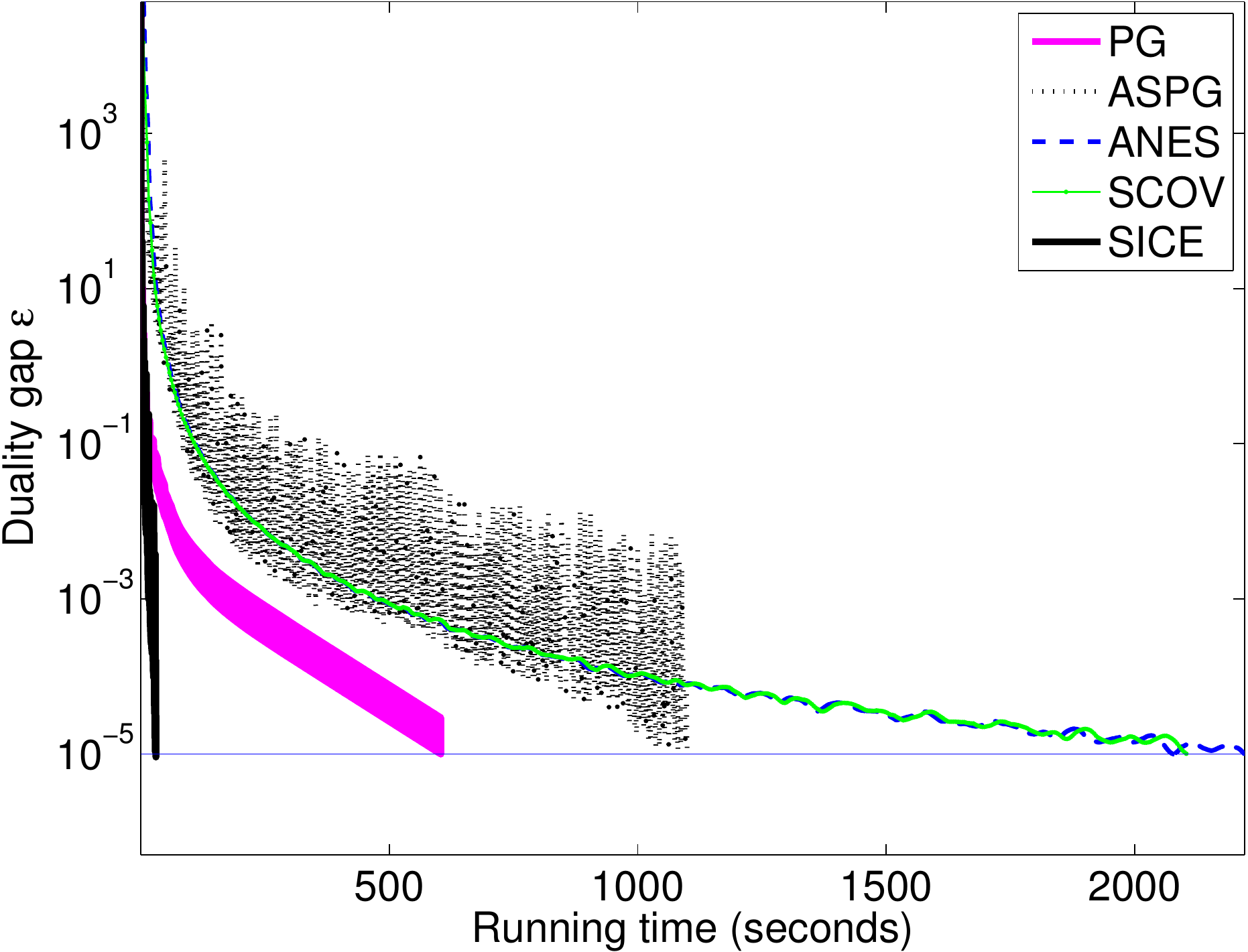}\\
  $n=300$ & $n=400$
\end{tabular}

\begin{tabular}{cc}
  \includegraphics[scale=0.4]{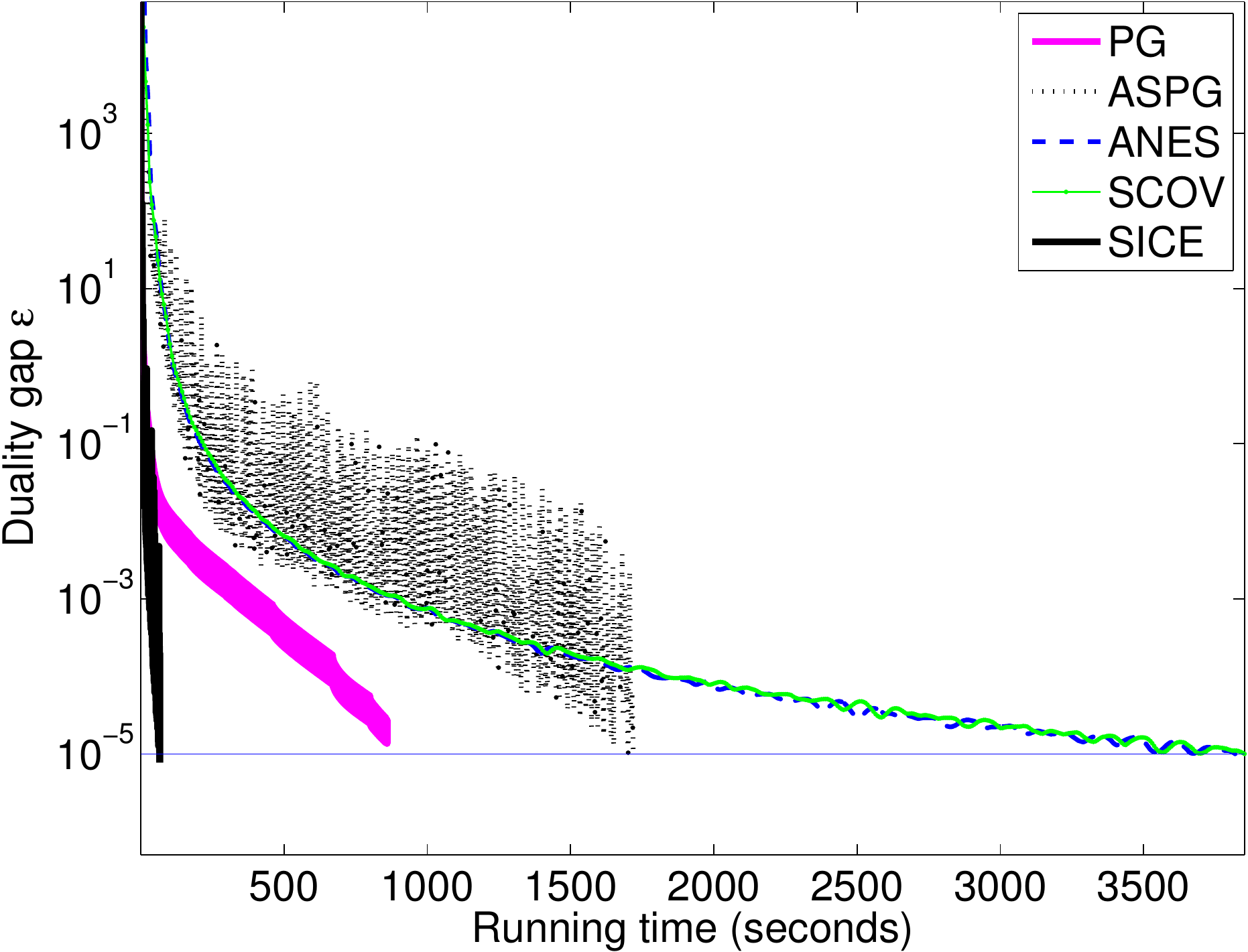} &
  \includegraphics[scale=0.4]{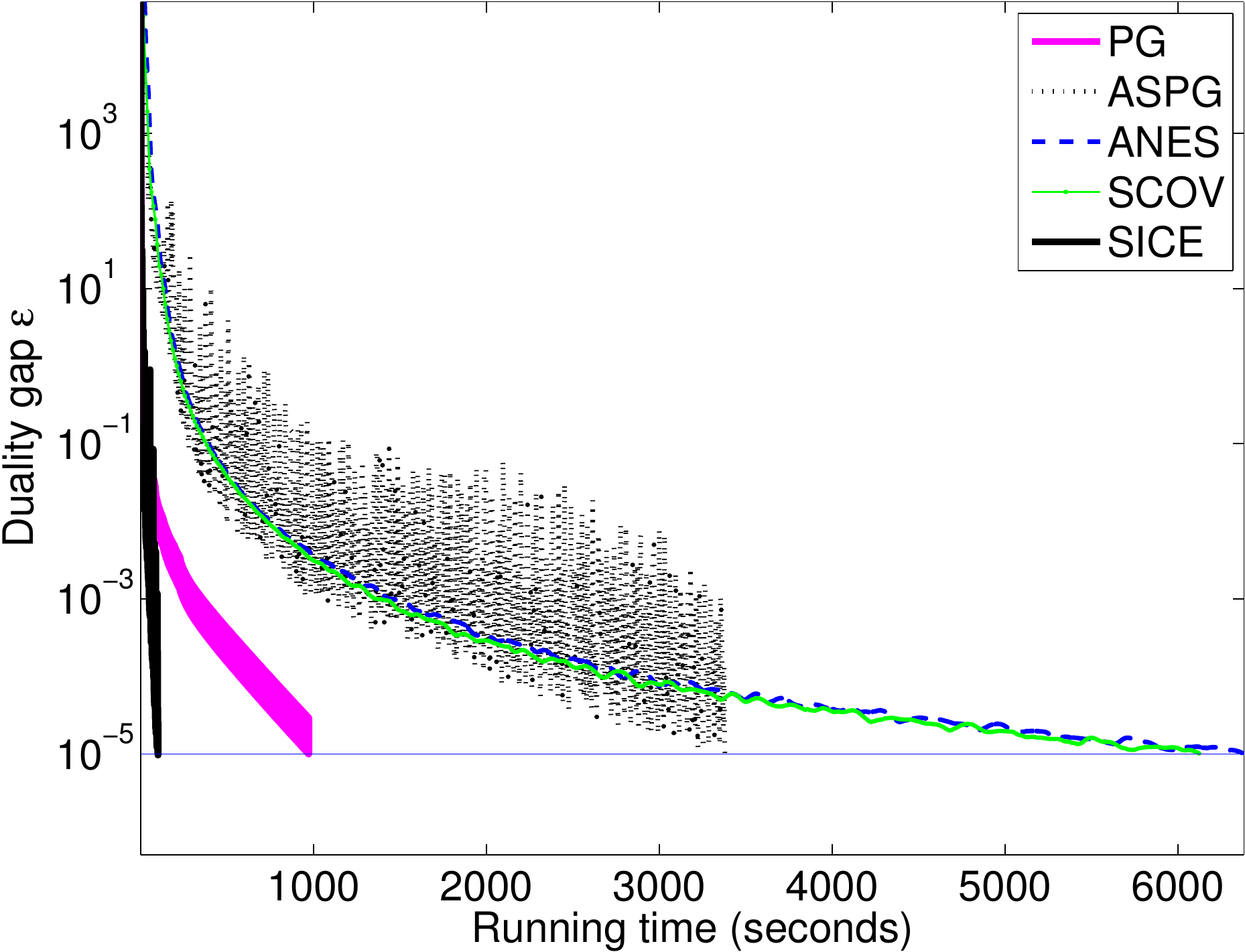}\\
  $n=500$ & $n=600$
\end{tabular}

\begin{tabular}{cc}
  \includegraphics[scale=0.4]{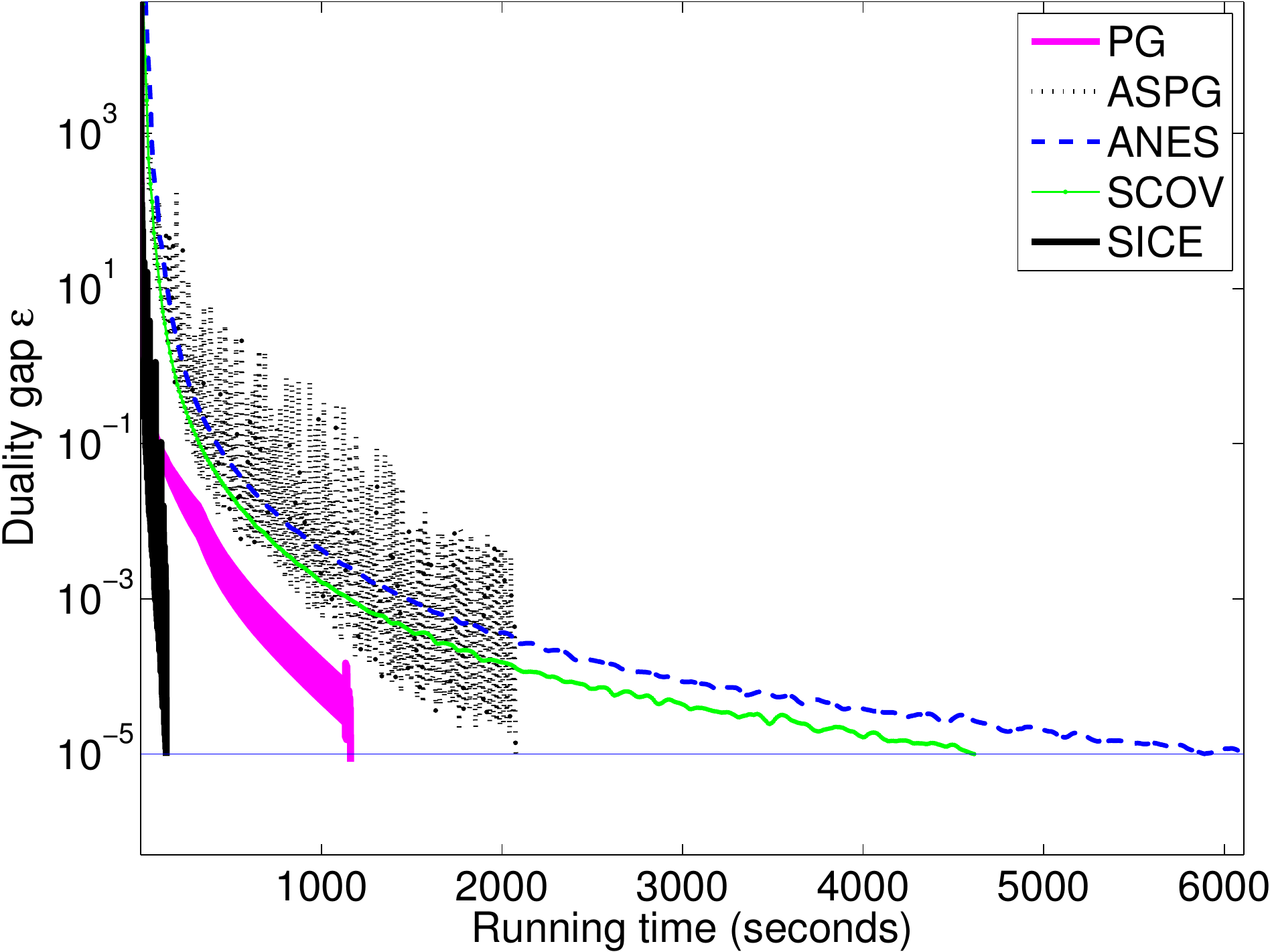} &
  \includegraphics[scale=0.4]{synth_8}\\
  $n=700$ & $n=800$
\end{tabular}

\begin{tabular}{cc}
  \includegraphics[scale=0.4]{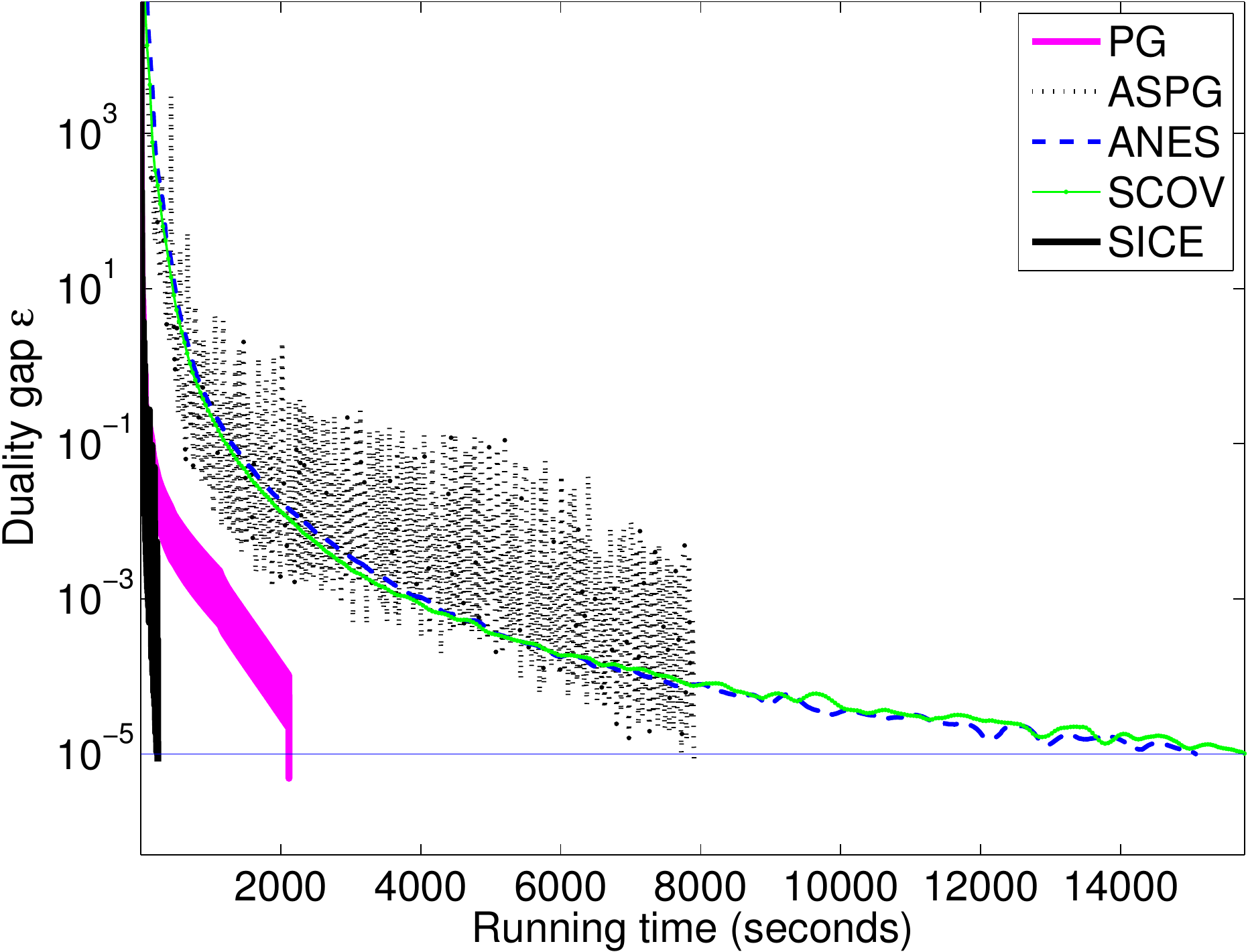} &
  \includegraphics[scale=0.4]{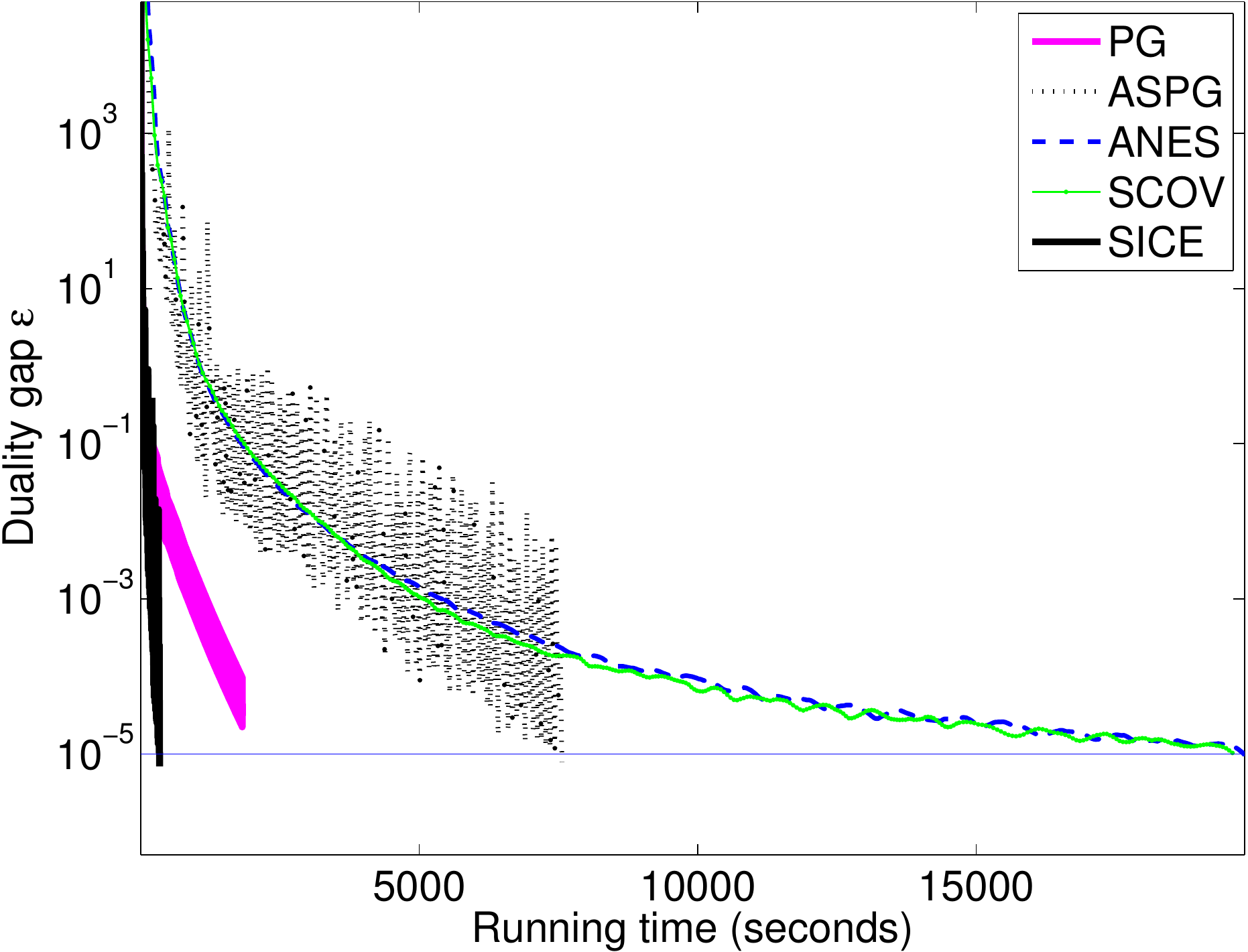}\\
  $n=900$ & $n=1000$
\end{tabular}

\end{document}